\newtheorem{example}{Example}
\newtheorem{lemma}{Lemma}
\newtheorem{theorem}{Theorem}
\theoremstyle{definition}
\newtheorem{definition}{Definition}
\newtheorem{prule}{Rule}
\newcommand{\blue}[1]{\textcolor{blue}{#1}}
\newcommand\vldbdoi{XX.XX/XXX.XX}
\newcommand\vldbpages{XXX-XXX}
\newcommand\vldbvolume{14}
\newcommand\vldbissue{1}
\newcommand\vldbyear{2020}
\newcommand\vldbauthors{\authors}
\newcommand\vldbtitle{\shorttitle} 
\newcommand\vldbavailabilityurl{https://github.com/Mateng0228/VPlatoon}
\newcommand\vldbpagestyle{plain} 
\begin{document}
\title{Mining Platoon Patterns from Traffic Videos}

\author{Yijun Bei}
\affiliation{%
  \institution{Zhejiang University, China}
}
\email{beiyj@zju.edu.cn}

\author{Teng Ma}
\affiliation{%
  \institution{Zhejiang University, China}
}
\email{mt0228@zju.edu.cn}

\author{Dongxiang Zhang}
\affiliation{%
  \institution{Zhejiang University, China}
  }
\email{zhangdongxiang@zju.edu.cn}

\author{Sai Wu}
\affiliation{%
  \institution{Zhejiang University, China}
}
\email{wusai@zju.edu.cn}

\author{Kian-Lee Tan}
\affiliation{%
  \institution{National University of Singapore}
}
\email{tankl@comp.nus.edu.sg}

\author{Gang Chen}
\affiliation{%
  \institution{Zhejiang University, China}
}
\email{cg@zju.edu.cn}

\begin{abstract}
Discovering co-movement patterns from urban-scale video data sources has emerged as an attractive topic. \blue{This task aims to identify groups of objects that travel together along a common route, which offers effective support for government agencies in enhancing smart city management.} However, the \blue{previous} work has made a strong assumption on the accuracy of recovered trajectories from videos and their co-movement pattern definition requires the group of \blue{objects} to appear across \blue{consecutive cameras along the common route. In practice, this often leads to} missing patterns if a vehicle is not correctly identified from a certain camera due to object occlusion or vehicle mis-matching.

To address this challenge, we propose a relaxed definition of co-movement patterns from video data, \blue{which removes the consecutiveness requirement in the common route and accommodates a certain number of missing captured cameras for objects within the group. Moreover, a novel enumeration framework called MaxGrowth is developed to efficiently retrieve the relaxed patterns.} Unlike previous filter-and-refine frameworks \blue{comprising both candidate enumeration and subsequent candidate verification procedures}, MaxGrowth incurs no verification cost for the candidate patterns. \blue{It treats the co-movement pattern as an equivalent sequence of clusters, enumerating candidates with increasing sequence length while avoiding the generation of any false positives.} Additionally, we also propose two \blue{effective} pruning rules to efficiently filter the non-maximal patterns. Extensive experiments are conducted to validate the efficiency of MaxGrowth and the quality of its generated co-movement patterns. Our MaxGrowth runs up to two orders of magnitude faster than the baseline algorithm. It also demonstrates high accuracy in real video dataset when the trajectory recovery algorithm is not perfect.
\end{abstract}

\maketitle

\pagestyle{\vldbpagestyle}
\begingroup\small\noindent\raggedright\textbf{PVLDB Reference Format:}\\
\vldbauthors. \vldbtitle. PVLDB, \vldbvolume(\vldbissue): \vldbpages, \vldbyear.\\
\href{https://doi.org/\vldbdoi}{doi:\vldbdoi}
\endgroup
\begingroup
\renewcommand\thefootnote{}\footnote{\noindent
This work is licensed under the Creative Commons BY-NC-ND 4.0 International License. Visit \url{https://creativecommons.org/licenses/by-nc-nd/4.0/} to view a copy of this license. For any use beyond those covered by this license, obtain permission by emailing \href{mailto:info@vldb.org}{info@vldb.org}. Copyright is held by the owner/author(s). Publication rights licensed to the VLDB Endowment. \\
\raggedright Proceedings of the VLDB Endowment, Vol. \vldbvolume, No. \vldbissue\ %
ISSN 2150-8097. \\
\href{https://doi.org/\vldbdoi}{doi:\vldbdoi} \\
}\addtocounter{footnote}{-1}\endgroup

\ifdefempty{\vldbavailabilityurl}{}{
\vspace{.3cm}
\begingroup\small\noindent\raggedright\textbf{PVLDB Artifact Availability:}\\
The source code, data, and/or other artifacts have been made available at \url{\vldbavailabilityurl}.
\endgroup
}

\section{Introduction}
Co-movement pattern mining from large-scale GPS trajectories has been extensively studied over the past two decades~\cite{BENKERT2008report-flock, Jeung2008pvldb, li2010swarm, fan2016spare}. Recently, this mining task was applied to video data to aid government agencies in smart city management, including detecting traffic congestion at various granularities~\cite{orakzai2019k} and monitoring suspicious groups for security~\cite{yadamjav2020recurrent}. The underlying motivation is that trajectory data are primarily collected and owned by commercial hail-riding or map-service companies, but not directly accessible to government agencies. In contrast, government-installed surveillance cameras have covered the entire urban area, \blue{making it advantageous} to re-examine the problem of co-movement pattern mining from videos.

\begin{figure}[h!]
    \centering
	\includegraphics[width=0.44\textwidth]{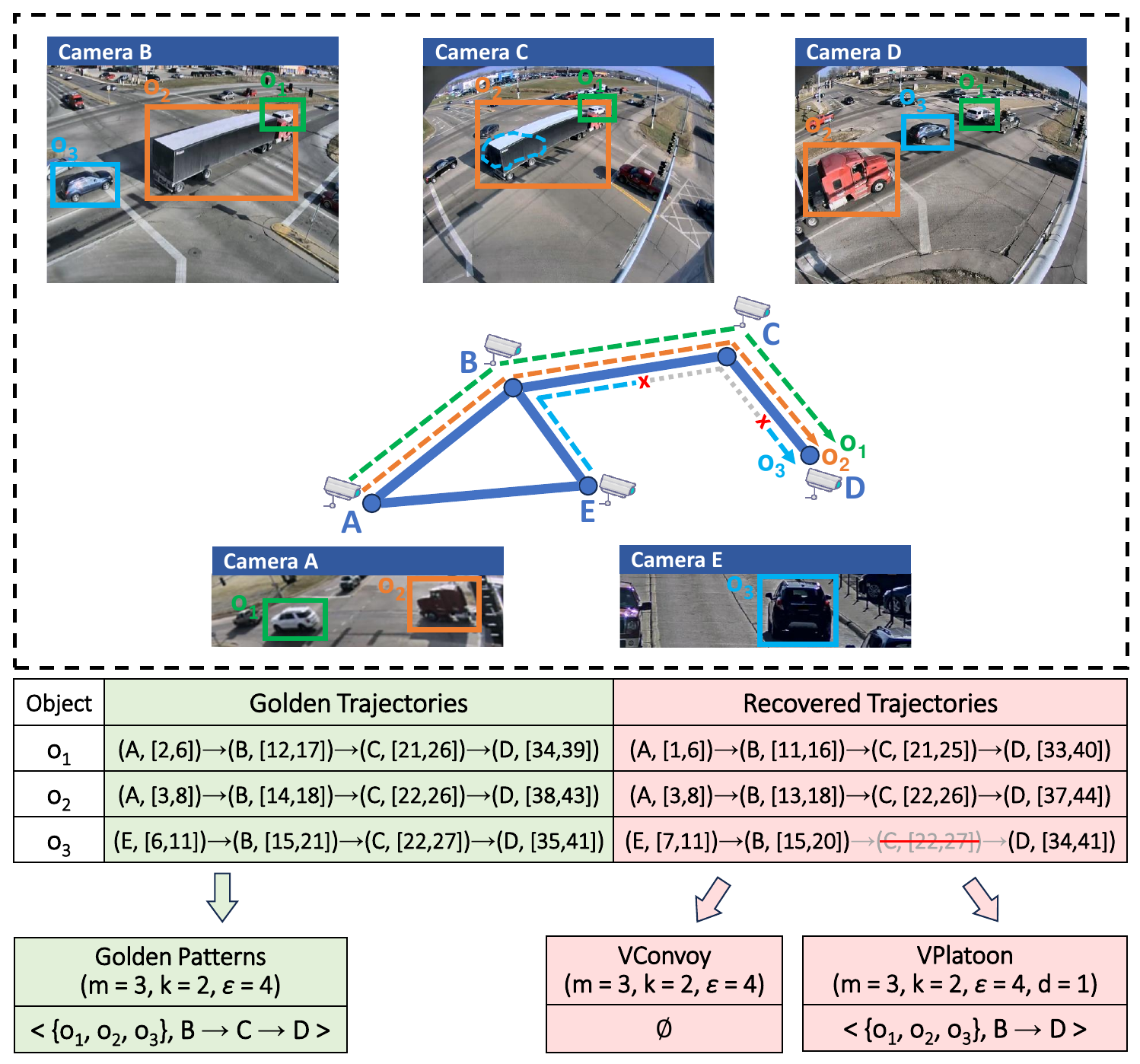}
    \caption{An illustrative example for relaxed co-movement pattern mining from real video data.}
    \label{fig:problem-example}
    \vspace{-2mm}
\end{figure}

In the scenario of video mining, \blue{the trajectory of each individual object} is extracted using cross-camera trajectory recovery algorithms~\cite{he2020citytrack, tong2021recst, yang2022box} and represented as a sequence of camera IDs and time intervals. \blue{As an example, Figure~\ref{fig:problem-example} depicts the golden (ground-truth) trajectories and recovered trajectories of three objects $o_1$, $o_2$, and $o_3$. Since the GPS trajectory-based pattern definition is unsuitable for video data}, the co-movement pattern is redefined as a group of objects traveling together along a common route in the road network~\cite{Zhang2023pvldb}. \blue{Specifically, this definition involves three parameters: $m$, $k$, and $\epsilon$. That is, each valid group must contain at least $m$ objects; these objects must traverse at least $k$ consecutive cameras which constitute the common route; and the objects must be captured by each common camera within a time interval of $\epsilon$. We call this pattern VConvoy, as it can be regarded as a migration of the GPS trajectory-based pattern Convoy~\cite{jeung2008convoy}. In Figure~\ref{fig:problem-example}, three objects travel together across cameras $B$, $C$ and $D$. If $m=3$, $k=2$ and $\epsilon=4$, a valid VConvoy pattern $\langle\{o_1,o_2,o_3\}, B \rightarrow C \rightarrow D \rangle$ can be mined from the golden trajectories.}

\blue{However, the trajectory recovery algorithms in practice are not perfect, and the available recovered trajectories often exhibit differences from the golden trajectories. On the other hand, VConvoy assumes no accuracy loss in the mining trajectories and requires that the objects in a valid pattern appear across a consecutive sequence of cameras to constitute the common route. As a result, this rigid setting incurs the risk of pattern missing. For instance, in Figure~\ref{fig:problem-example}, vehicle $o_3$ is occluded by truck $o_2$ in camera $C$, resulting in a recovered trajectory of $E \rightarrow B \rightarrow D$ that omits camera $C$. Hence, when applying VConvoy pattern mining to the recovered trajectories, the pattern $\langle\{o_1,o_2,o_3\},B\rightarrow C\rightarrow D\rangle$, which is identifiable from the golden trajectories, will be absent.} Additionally, object occlusion is not the sole issue; object ID switching poses a significant challenge for existing object tracking algorithms~\cite{dendorfer2020mot20, sun2022dancetrack, wang2022split}. If moving objects are incorrectly identified and assigned wrong IDs, VConvoy mining will also fail under such circumstances.

To effectively support co-movement mining from imperfectly recovered trajectories, we propose a new definition of a relaxed pattern in this paper. Inspired by the platoon pattern~\cite{li2015platoon}, we remove the requirement for consecutive camera sequences and introduce a parameter $d$, which allows for a certain number of missing cameras \blue{in the common route} of objects within the group. \blue{We call this new pattern VPlatoon. In other words, VPlatoon involves four parameters: $m$, $k$, $\epsilon$, and $d$. It still requires a valid group of at least $m$ objects to exhibit temporal proximity $\epsilon$ along a common route of length at least $k$, but the objects in the group now can briefly leave the common route with a maximum camera gap of $d$. For example, the common route among the recovered trajectories of $\{o_1,o_2,o_3\}$ in Figure~\ref{fig:problem-example} is $B \rightarrow D$, bypassing a missing camera $C$. With $d=1$, the pattern $\langle\{o_1,o_2,o_3\},B\rightarrow D\rangle$ becomes a valid VPlatoon according to our relaxed pattern definition.}

After eliminating the constraint of a consecutive sequence of cameras, the search space is expanded and the problem becomes more complex than the original \blue{VConvoy mining problem}, which has been proved to be NP-Hard in~\cite{Zhang2023pvldb}. We present a baseline algorithm that extends the TCS-tree algorithm~\cite{Zhang2023pvldb}. \blue{This baseline builds on the basic filter-and-refine framework of the TCS-tree with two key improvements. First, in the filter stage, we substitute the original frequent substring miner in TCS-tree with the sequential pattern miner~\cite{yan2003clospan, wang2004bide} and further improve filter effectiveness based on the idea of camera partitioning. Second, in the refinement stage, we incorporate multiple buffers to efficiently accommodate the gap tolerance parameter $d$.}

According to our empirical analysis, the baseline suffers from high candidate verification cost because the candidate search space is huge and a valid candidate requires multiple constraints to be satisfied simultaneously. In this paper, we propose a novel candidate enumeration framework called MaxGrowth. It eliminates the necessity of the expensive verification procedure without generating any false positives. Thus, compared with the baseline, it incurs no verification cost and the total mining time can be remarkably reduced. The core idea is to \blue{treat a VPlatoon pattern} as an equivalent sequence of clusters, \blue{where each cluster comprises a specific camera and a group of proximate objects. Patterns are then constructed by incrementally expanding the cluster sequences. In such a manner,} the pattern mining task is decomposed into a series of feasible cluster selection problems, \blue{and MaxGrowth can effectively utilize pattern constraints during enumeration while avoiding the need to handle numerous intermediate false-positive candidates.} Moreover, to efficiently remove valid but non-maximal candidate patterns, \blue{we introduce two pruning rules: the root pruning rule and the dependency pruning rule. The former prunes non-contributing search space at the root node of the search tree, while the latter allows for more refined pruning in subsequent search nodes during candidate enumeration.}

To sum up, the key contributions of this paper are as follows.
\begin{itemize}
    \item We present a new type of video-based co-movement pattern mining problem, whose goal is to be more tolerant with flawed video tracking and trajectory recovery algorithms. 
    \item We present a novel candidate enumeration framework called MaxGrowth that eliminates the necessity of candidate validness verification cost. 
    \item Two effective pruning rules are proposed to remove non-maximal candidate patterns, which can reduce the cost of dominance verification by over 90\%.
    \item Extensive experiments are conducted on real-world datasets, validating the effectiveness of the proposed pattern while demonstrating the efficiency and scalability of our mining techniques.
\end{itemize}

The rest of the paper is organized as follows. We review related literature in Section~\ref{sec:related-work}. The basic concepts and problem statement are presented in Section~\ref{sec:definition}. Section~\ref{sec:baseline} presents the baseline algorithm. we propose our enumeration framework MaxGrowth in Section~\ref{sec:cls-growth} and the two pruning rules for maximal pattern mining are presented in Section~\ref{sec:max-pattern}. Experimental evaluation is conducted in Section~\ref{sec:experiment}. We conclude the paper in Section~\ref{sec:conclusion}.

\section{RELATED WORK}\label{sec:related-work}

Co-movement pattern mining intends to find a group of objects moving within spatial proximity over a specified time interval.  Previous studies can be divided into two categories according to data sources: GPS trajectory-based and video-based.

In the realm of GPS trajectory-based co-movement patterns, the flock~\cite{BENKERT2008report-flock,gudmundsson2006computing} and convoy~\cite{jeung2008convoy} patterns both require that candidate groups appear across consecutive timestamps. The primary distinction between them lies in their definition of spatial proximity. The flock pattern requires objects within the same cluster to be within a disk with a diameter smaller than a specified parameter, whereas the convoy pattern employs density-based spatial clustering~\cite{ester1996dbscan}.
The performance bottleneck in mining these two types of co-movement patterns is the clustering overhead applied at each timestamp. Besides general acceleration methods in spatiotemporal management~\cite{chen2015efficient,wang2019fast,ding2018ultraman}, several specialized techniques for co-movement mining have been developed to mitigate this issue, such as trajectory simplification~\cite{Jeung2008pvldb}, spatial partitioning~\cite{liu2021ecma,chen2019real}, and the divide-and-conquer scheme~\cite{orakzai2019k}.

In the \blue{definitions} of group~\cite{wang2006group}, swarm~\cite{li2010swarm}, and platoon~\cite{li2015platoon} patterns, the temporal duration constraint is relaxed. Their mining algorithms follow similar solving schemes, where the main idea is to grow an object set from an empty set. Meanwhile, various pruning techniques were designed to ensure mining efficiency. The group pattern mining primarily utilizes its proposed VG-graph structure. While the platoon pattern miner applies multiple fine-grained pruning rules based on the prefix table. On the other hand, the swarm pattern mining algorithm mainly introduces two pruning rules called backward and forward pruning to trim the search space for mining closed swarm patterns. These optimization techniques are based on the analysis of time dimension, but in our scenario, the time information is restricted to independent time intervals, which hinders their adaptation to our problem.

Video-based co-movement pattern mining was first examined by Zhang et al.~\cite{Zhang2023pvldb}. The problem was proven to be NP-hard. To solve it efficiently, an index called temporal-cluster suffix tree (TCS-tree) and a sequence-ahead pruning framework based on TCS-tree were proposed. To reduce verification cost on the candidate paths, the authors introduced a sliding-window based co-movement pattern enumeration strategy and a hashing-based dominance eliminator. Our work extends the definition to address the issue of missing patterns incurred by object occlusion or vehicle mis-matching.


\section{Problem Definition}\label{sec:definition}

Given a corpus of input video data captured by surveillance cameras in scenes such as urban areas or highways. We denote all objects appearing in the input video as $\mathbb{O} = \{ o_1,o_2,o_3\dots\}$.
Following previous definition of co-movement pattern mining~\cite{Zhang2023pvldb}, we assume that the travel paths of these objects can be extracted via trajectory recovery algorithms~\cite{tong2021recst, he2020citytrack} as a pre-processing step.

\begin{figure}[h!]
    \centering
    \includegraphics[width=0.275\textwidth]{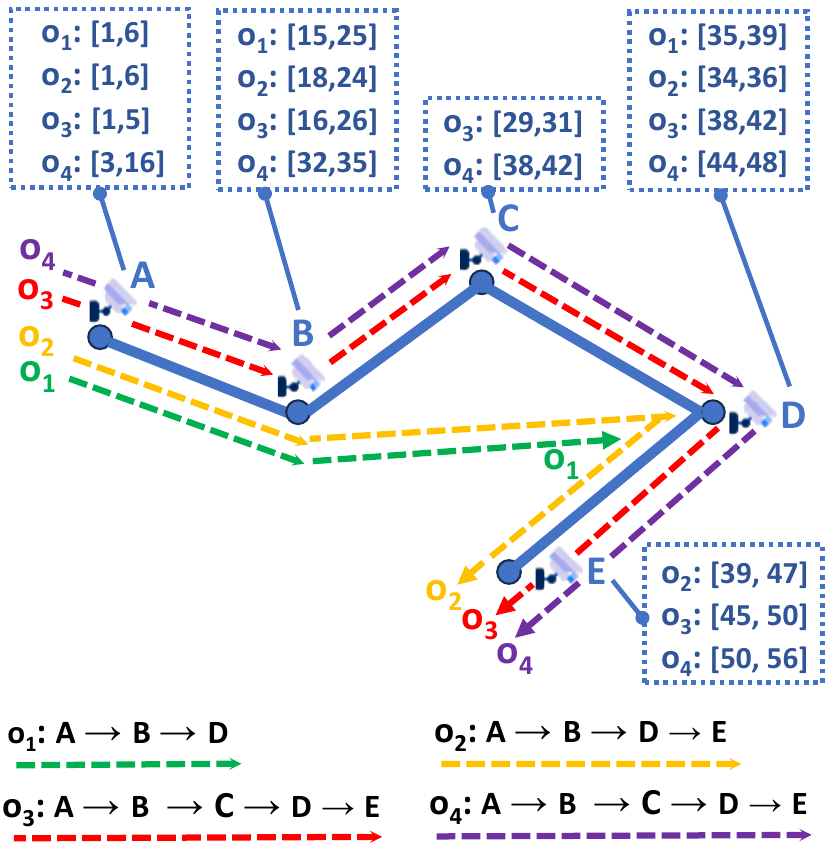}
    \caption{Travel paths for four objects $\{o_1,o_2,o_3,o_4\}$.}
    \label{fig:run-example}
    \vspace{-3mm}
\end{figure}

\begin{definition} \textbf{Travel Path} \\
The travel path $P_i$ of an object $o_i \in \mathbb{O}$ is defined as a sequence of surveillance cameras with associated time intervals:\\
$P_i = (c_1, [s_1, e_1]) \rightarrow (c_2, [s_2, e_2]) \rightarrow \ldots \rightarrow (c_n, [s_n, e_n])$\\
where $(c_j, [s_j, e_j])$ indicates that $o_i$ is captured by camera $c_j$ during time interval $[s_j, e_j]$ with $s_j < e_j$ and $s_j < s_{j+1}$.
\end{definition}

\blue{We call $s_j$ and $e_j$ the entrance and exit timestamp of $o_i$ at camera $c_j$, respectively, and the time interval of $P_i$ is $[s_i, e_n]$.} Moreover, when there is no need to mentioned the temporal dimension, we abbreviate $P_i$ as $c_1 \rightarrow c_2 \rightarrow \ldots \rightarrow c_n$ and let $c^i_j$ denote the $j$-th camera \blue{$(c_j)$} that object $o_i$ passes through.

\begin{example}
Figure~\ref{fig:run-example} illustrates the travel paths of four example objects, where the road network is represented by thick solid lines and the extracted travel paths are represented by dashed lines.
The travel path $P_1$ of object $o_1$ is $(A, [1, 6]) \rightarrow (B, [15, 25]) \rightarrow (D, [35, 39])$, and the time interval of $P_1$ is $[1, 39]$.
\end{example}

To represent the subpath relationships between imperfectly recovered trajectories and suppoert our relaxed definition of co-movement pattern, we define the notion of $d$-subpath: 

\begin{definition} $\mathbf{d}$\textbf{-subpath} \\
The travel path $P_i$ is a $d$-subpath of $P_j$, if the time interval of $P_i$ is contained in $P_j$ and there exists an injection function $f:\{1,2,\ldots,|P_i|\} \rightarrow \{1,2,\ldots,|P_j|\}$ satisfying:
\begin{itemize}
    \item $\forall 1 \leq t < |P_i|$, $f(t+1) - f(t) \leq d + 1$.
    \item $\forall 1 \leq t \leq |P_i|$, $c^i_t = c^j_{f(t)}$.
\end{itemize}
\end{definition}

The gap tolerance parameter $d$ is used to controls the number of missing cameras between adjacent cameras when mapping $P_i$ to $P_j$. Specifically, given that the time interval of $P_i$ is contained within $P_j$, $P_i$ exactly matches a consecutive substring of $P_j$ when d = 0. \blue{In contrast}, $P_i$ can be any subsequence of $P_j$ when d = $\infty$.

\begin{example}
In Figure~\ref{fig:run-example}, when $d = 1$, $P_1$ is a $d$-subpath of $P_3$. First, the time interval of $P_1$ ($[1, 39]$) is contained in the time interval of $P_3$ ($[1, 50]$). Additionally, given the mapping $f: \{1, 2, 3\} \to \{1, 2, 4\}$, we find the corresponding consistent cameras in $P_3$ for each camera in $P_1$ according to $f$: $c_1^{1} = c_1^{3}$, $c_2^{1} = c_2^{3}$, and $c_3^{1} = c_4^{3}$. Furthermore, for the mapped cameras in $P_3$, the distances between adjacent cameras are no more than 2, i.e., $c_4^{3} - c_2^{3} \leq 2$ and $c_2^{3} - c_1^{3} \leq 2$.
\end{example}

We continue to define the notion of $\epsilon$-close to indicate the proximity between objects. Here, $\epsilon$-close implies the temporal closeness of objects passing through the same camera.

\begin{definition}\label{def:eps} $\mathbf{\epsilon}$\textbf{-close at camera} $\mathbf{c}$\\
A group of objects $O \subseteq \mathbb{O}$ is $\epsilon$-close at camera $c$, if $\forall$ objects $o_i, o_j \in O$, the gap between their entrance timestamps at $c$ (i.e., $|s_i - s_j|$) does not exceed $\epsilon$.
\end{definition}

If an object $o$ belongs to a group of $\epsilon$-close objects $O$, we also call that $o$ is $\epsilon$-close to any other objects in $O$.

\begin{example}
As shown in Figure~\ref{fig:run-example}, When $\epsilon = 6$, we can say that the object set $\{o_1, o_2, o_3\}$ is $\epsilon$-close at camera $B$, as the time gap between the entrance timestamps of any object pair is less than $\epsilon$. Specifically, $|s_1 - s_2| = 3 \leq 6$, $|s_1 - s_3| = 1 \leq 6$, and $|s_2 - s_3| = 2 \leq 6$. In contrast, the object set $\{o_1, o_2, o_3, o_4\}$ is not $\epsilon$-close at camera $B$ because $|s_1 - s_4| = 17 > 6$.
\end{example}

Now, we formalize the definition of the relaxed co-movement pattern from videos \blue{ (VPlatoon)}, which represents a group of objects traveling together satisfying $\epsilon$-close \blue{along} a common $d$-subpath.

\begin{definition} \textbf{The Relaxed Co-movement Pattern} \\
For parameters $m$, $k$, $d$ and $\mathbf{\epsilon}$, the relaxed co-movement pattern is defined as $R = \langle O, P \rangle$, where $O$ and $P$ represent a set of objects and a common path respectively. Meanwhile, it must satisfy the following conditions:
\begin{enumerate}
    \item $O$ is $\epsilon$-close at each camera of $P$.
    \item For each object $o_i \in O$, $P$ is a $d$-subpath of $P_i$.
    \item $|O| \geq m$.
    \item $|P| \geq k$.
\end{enumerate}
\end{definition}

\begin{example}
In Figure~\ref{fig:run-example}, given $m = 2$, $k = 3$, $d = 1$, and $\epsilon = 6$, $\langle \{o_1, o_2, o_3\}, A \to B \to D \rangle$ is a relaxed co-movement pattern. The object set \blue{$\{o_1, o_2, o_3\}$} contains more than 2 objects and is $\epsilon$-close at cameras $A$, $B$ and $D$. Meanwhile, the common route $A \to B \to D$ has a length of at least 3 and is a $d$-subpath of $P_1$, $P_2$, and $P_3$.
\end{example}

Those relaxed co-movement patterns with a large number of objects or a long common route could possess a combinatorial number of redundant "sub-patterns" that do not convey more information than their parent pattern. For the sake of conciseness, we define the concept of maximal pattern.

\begin{definition}\label{def:max-pattern} \textbf{Maximal Relaxed Co-movement Pattern}\\
A relaxed co-movement pattern $R_i=\langle O_i,P_i \rangle$ is maximal if there exists no other pattern $R_j = \langle O_j, P_j \rangle$ such that $O_i \subseteq O_j$ and $P_i$ is a $d$-subpath of $P_j$.
\end{definition}

Our goal is to discover all the maximal relaxed co-movement patterns under parameters $m$, $k$, $d$ and $\epsilon$.
\begin{example}
In Figure~\ref{fig:run-example}, for $m = 2$, $k = 3$, $d = 1$, and $\epsilon = 6$, $R = \langle \{o_1, o_2, o_3\}, A \to B \to D \rangle$ is a maximal pattern, whereas $R_i = \langle \{o_2, o_3\}, A \to D \rangle$ is not a maximal pattern. This is because the object set \(\{o_2, o_3\}\) in $R_i$ is a subset of \(\{o_1, o_2, o_3\}\), and the common path $A \to D$ in $R_i$ is a $d$-subpath of $A \to B \to D$. In contrast, there is no other pattern for $R$ that satisfies above similar conditions.
\end{example}

Table~\ref{tbl:notation} provides a summary of the frequently used notations.
\begin{table}[ht!]
\centering
\vspace{-2mm}
\caption{Notation Table.}
\vspace{-2mm}

\resizebox{0.9\linewidth}{!}{
\begin{tabular}{|c|l|} \hline
$\mathbb{O}$ & All moving objects contained in $\mathbb{V}$\\ \hline
$P_i$ & The travel path of an object $o_i \in \mathbb{O}$\\ \hline
$R$ & The relaxed co-movement pattern from video data\\ \hline
$m$ & The minimum number of objects in $R$\\ \hline
$k$ & The minimum length of the common route in $R$\\ \hline
$d$ & The gap tolerance in $R$\\ \hline
$\epsilon$ & The threshold of closeness in $R$\\ \hline
$R.O\text{/}P$ & The object set / common route of $R$\\ \hline
$CL_i=(O_i,c_i)$ & A certain cluster with object set $O$ derived from camera $c$\\ \hline
$p^i_j$ & The position of camera $CL_i.c$ in object $o_j$'s travel path $P_j$\\ \hline
$S$ & A cluster sequence \\ \hline
$\lambda(S)$ & The core objects of $S$\\ \hline
$R(S)$ & The candidate co-movement pattern corresponding to $S$\\ \hline
\end{tabular}
}

\label{tbl:notation}
\vspace{-3mm}
\end{table}

\section{Baseline Algorithm}\label{sec:baseline}
In this section, we propose a baseline algorithm that extends TCS-tree~\cite{Zhang2023pvldb} for VPlatoon pattern mining. It adopts the \blue{filter-and-refine} framework similar to TCS-tree and we call this baseline FRB algorithm. \blue{We begin with a brief introduction to TCS-tree, then provide the detailed presentation of the FRB.}

\blue{The TCS-tree algorithm is a state-of-the-art VConvoy mining algorithm that follows the filter-and-refine framework. Its filter stage relies on an efficient index, the temporal-cluster suffix tree, which performs two-level temporal clustering within each camera and constructs a suffix tree from the resulting clusters. By adopting frequent substring mining based on this index, TCS-tree identifies common subsequences as candidates through low granularity filtering. Each subsequence is a consecutive camera sequence with a length of at least $k$ and support from at least $m$ proximate objects. In the subsequent refinement stage, TCS-tree uses a modified CMC algorithm~\cite{Jeung2008pvldb, kalnis2005movingcls} to precisely verify the temporal proximity (parameter $\epsilon$) of objects along the common subsequence for each candidate and obtain the final results.}

In order to extend TCS-tree as a tailored algorithm for VPlatoon pattern mining, we introduce two key improvements. First, in the filter stage, we replace the original frequent substring miner with the sequential pattern miner~\cite{yan2003clospan, wang2004bide} for candidate enumeration and partition each camera based on temporal proximity to improve enumeration effectiveness. Second, in the refinement stage, we incorporate multiple buffers into the original candidate verification algorithm to accommodate the gap tolerance parameter $d$.

\blue{We now present the filter stage in detail. In this stage, FRB focuses on the route information within the input travel paths and identifies frequent camera subsequences that represent common routes as candidates.} Specifically, each candidate subsequence must contain at least $k$ cameras and appear in the travel paths of at least $m$ objects. Moreover, since the definition of VPlatoon removes the consecutive constraint on the common route, candidate subsequences can be traversed both consecutively and non-consecutively. Therefore, the basic scheme of FRB in the filter stage is to first remove the time intervals from each moving object's travel path and represent the path as a sequence of cameras in ascending order of their entrance times. Then, the sequential pattern mining algorithm~\cite{yan2003clospan, wang2004bide} with the minimum length of $k$ and support of $m$ is applied to these camera sequences for candidate enumeration.

Furthermore, since the sequential pattern mining algorithm cannot be accelerated by the TCS-tree index, we adapt the core idea of this index to partition the cameras and generate a more nuanced sequence representation. Performing sequential pattern mining on these nuanced sequences instead of the camera sequences, FRB could filter out more false positives in the filter stage.

Specifically, we partition each camera based on the temporal proximity of the objects passing through it. The partitions of a camera $c$ is defined as $\mathbb{T}_c = \{T^1_c, T^2_c, ..., T^n_c \}$, where:
\begin{itemize}
    \item For $1 \leq i \leq n$, $T^i_c$ is a set of objects passing through $c$.
    \item $\bigcup_{i=1}^{n} T^i_c$ constitutes all the objects passing through $c$.
    \item $\forall o_u \in T^i_c$, if there exists $o_v$ that $o_u$ is $\epsilon$-close to $o_v$, then $o_v$ is also in $T^i_c$; otherwise, $o_u$ is the only element in $T^i_c$.
\end{itemize}

It's not difficult to see that an object belongs to exactly one partition when passing through a \blue{given} camera. We can transform each object's camera sequence into a  partition sequence corresponding to the cameras it passes through. Two objects might have a common camera subsequence, but their partition sequences can be completely different if they are not $\epsilon$-close. Therefore, this fine-grained representation offers stronger candidate filtering capabilities.

\begin{example}
In Figure~\ref{fig:run-example}, let $\epsilon = 6$. Since all objects at camera $A$ are $\epsilon$-close, $\mathbb{T}_A = \{ \{o_1, o_2, o_3, o_4\} \}$. For camera $B$, we have $o_1, o_2, o_3$ are $\epsilon$-close but $o_4$ is not $\epsilon$-close to them, so $\mathbb{T}_B = \{ \{o_1, o_2, o_3\}, \{o_4\} \}$. Similarly, $\mathbb{T}_C = \{ \{o_3\}, \{o_4\} \}$.
If we only consider cameras $A$, $B$ and $C$, objects $o_3$ and $o_4$ share the same sequence of camera $A \to B \to C$. However, their partition sequences are $T^1_A \to T^1_B \to T^1_C$ and $T^1_A \to T^2_B \to T^2_C$, respectively. Mining on these two partitions prevents the generation of false positive $(A \to B \to C, \{o_3, o_4\})$.
\end{example}

The following lemma is presented to demonstrate the completeness of FRB in the filter stage.
\begin{lemma}\label{lm:frb}
Each valid VPlatoon pattern is contained in at least one candidate from the filter stage.
\end{lemma}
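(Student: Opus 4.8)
The plan is to fix an arbitrary valid VPlatoon pattern $R=\langle O,P\rangle$, with $P=c_1\to c_2\to\ldots\to c_\ell$ so that $\ell\ge k$ and $|O|\ge m$, and to exhibit a candidate produced by the filter stage that subsumes it. The first step is the key structural observation: since $O$ is $\epsilon$-close at every camera $c_j$ of $P$, and the partition family $\mathbb{T}_{c_j}$ is by construction closed under the ``$\epsilon$-close at $c_j$'' relation, all objects of $O$ must fall inside one and the same partition of $c_j$; call it $T^{(j)}_{c_j}$. Thus the camera route of $R$ lifts canonically to a single partition sequence $\sigma=T^{(1)}_{c_1}\to T^{(2)}_{c_2}\to\ldots\to T^{(\ell)}_{c_\ell}$.

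Next I would verify that $\sigma$ is a common subsequence of the partition sequences of every object in $O$, hence a sequential pattern with support at least $m$ and length $\ell\ge k$. Indeed, for $o_i\in O$, the second defining condition of a VPlatoon pattern states that $P$ is a $d$-subpath of $P_i$; assuming, as is standard, that no travel path visits the same camera twice, this forces every $c_j$ to occur in $P_i$ at a uniquely determined position, with consecutive such positions increasing and differing by at most $d+1$. Reading off, at those positions, the partition to which $o_i$ belongs yields exactly $T^{(1)}_{c_1},\ldots,T^{(\ell)}_{c_\ell}$ in order inside $o_i$'s partition sequence, which is the claim. Since $\sigma$ therefore satisfies both the support threshold $m$ and the length threshold $k$, the sequential pattern miner returns either $\sigma$ itself or, as it mines only closed patterns, a closed super-sequence $\sigma'$ of $\sigma$ whose support set still contains $O$ (a super-sequence can only shrink the support set, and closure keeps it equal to that of $\sigma$).

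The third and most delicate step is to argue that the returned candidate actually ``contains'' $R$: its support set contains $O$ (already established) and its camera projection $P'$ has $P$ as a $d$-subpath (in the camera-sequence sense; the time-interval clause of $d$-subpath is vacuous for an abstract candidate route). The crux is that passing to a closed super-sequence can only insert cameras into gaps that already exist inside each individual $P_i$. Concretely, fix $o_i\in O$: because the labels of $\sigma$ are pinned to distinct cameras each occurring once in $P_i$, the embedding of $\sigma$ into $o_i$'s partition sequence is unique, namely at the positions of $c_1<\ldots<c_\ell$ in $P_i$; every embedding of $\sigma'$ restricts to this one, so between consecutive $c_j$ and $c_{j+1}$ the sequence $\sigma'$ can carry at most $d$ extra elements (one fewer than the distance between those two positions in $P_i$, which is at most $d+1$). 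Projecting to cameras, at most $d$ cameras separate $c_j$ from $c_{j+1}$ in $P'$, which is exactly the $d$-subpath condition; length at least $k$ is inherited from $\sigma$.

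I expect the main obstacle to be precisely this last step, i.e.\ handling that CloSpan/BIDE emit only closed sequential patterns, so the natural witness $\sigma$ may be absorbed into a longer pattern and one must check that the absorption does not violate the gap bound $d$ --- which is what the uniqueness-of-embedding argument (together with the implicit no-repeated-camera assumption) secures. The remaining ingredients, the partition-closure observation and the subsequence/threshold bookkeeping, are routine. If ``contained in a candidate'' is meant only in the weaker sense that $P$ need be a subsequence, not a $d$-subpath, of the candidate route, the third step degenerates to a single line of subsequence transitivity and there is no real difficulty.
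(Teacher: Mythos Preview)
Your argument is correct, and its key step—that $\epsilon$-closeness of $O$ at each camera $c_j$ forces all of $O$ into a single partition block $T^{(j)}_{c_j}$—is precisely the content of the paper's proof. The paper packages this as a four-line contradiction: if the pattern were missed, $O$ would have to straddle two partitions at some camera $c$, contradicting $\epsilon$-closeness at $c$. That is literally all the paper proves; your steps two and three (that the lifted sequence $\sigma$ meets the length and support thresholds, and that containment survives passage to a closed super-sequence) are left implicit there.

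Your third step in fact goes beyond the paper. Since the filter stage invokes CloSpan/BIDE, which return only \emph{closed} sequential patterns, one should check that absorbing $\sigma$ into a closed super-sequence $\sigma'$ does not destroy the $d$-gap property the refinement stage relies on to recover $R$. Your uniqueness-of-embedding argument handles this correctly under the no-repeated-camera assumption you flag, and that assumption is consistent with the paper's partition construction, which places each object in exactly one partition per camera. The paper simply does not engage with this issue; in effect it operates under the weaker reading of ``contained'' that you identify in your final paragraph. So: same core idea, but your write-up is the more complete of the two.
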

\begin{proof}
We proof by contradiction. Suppose there exists a missing valid pattern $R = \langle O, P \rangle$. This implies that the objects in $O$ is not always in the same partition at each camera of $P$. So there exists a camera $c$ in $P$ where $O$ belongs to at least two different partitions of $c$. Since objects that are $\epsilon$-close must belong to the same partition, $O$ is not $\epsilon$-close at $c$. However, we know that $O$ must be $\epsilon$-close at each camera of $P$ by definition, which leads to a contradiction.
\end{proof}

\blue{We proceed to describe the refinement stage,} where FRB refines each candidate $(Seq, O)$ to ensure the temporal proximity $(\epsilon)$ and gap tolerance $(d)$ of objects in $O$ as they travel along the common camera subsequence $Seq$. The approach is similar to the candidate verification algorithm in TCS-tree, with additional buffer structures introduced to improve efficiency in adapting to the VPlatoon definition.
Specifically, the algorithm first computes all groups of objects in $O$ that are $\epsilon$-close for each camera of $Seq$. It then searches for valid patterns along $Seq$ while maintaining a global buffer at each camera to store partial patterns that successfully reach it. During the search at each camera, the groups of $\epsilon$-close objects are intersected with the partial patterns stored in the buffers of the previous $d+1$ cameras in $Seq$, thereby generating new VPlatoon patterns.

Algorithm~\ref{alg:frb} demonstrates the FRB algorithm. During the filter stage (lines 1-4), we first retrieve the camera sequences of all objects by removing the time intervals from their travel paths (line 1). Then, we compute all the partition sequences based on our aforementioned description of partitions (lines 2-3). In the final step of the filter stage, sequential pattern mining is performed on $\mathbb{S}_{T}$ for candidate enumeration (line 4).
During the subsequent refinement stage (lines 5-19), we first initialize $\mathbb{R}$ as the final result set and $Buf$ as the global buffer set (line 5). Then, the algorithm iterates over the cameras in $Seq$ of each candidate (lines 6-7).
At each processed camera $c_i$, all groups of $\epsilon$-close objects in $O$ are computed according to Definition~\ref{def:eps} and intersected with previous partial patterns to generate new partial patterns $R_{new}$ (lines 8-11). 
At this point, it's important to note that we still need to further verify whether each object in $R_{new}.O$ has traversed more than $d$ cameras that are not in $Seq$ between the current camera $c_i$ and the last reached camera $c_j$. This is done using the distance function $distance(o, c_i, c_j)$ (lines 12-13). Afterward, we can then update $Buf$ and $\mathbb{R}$ (lines 14-17).
Finally, the algorithm derives the final results by removing all non-maximal patterns from $\mathbb{R}$. This step is similar to the one used in TCS-tree.

\begin{algorithm}[t!]
\SetAlgoNoEnd \SetAlgoNoLine 
\caption{The FRB Algorithm}\label{alg:frb}
$\mathbb{S}_{c} \leftarrow$ retrieve the camera sequences of all objects\;
$\mathbb{T} \leftarrow$ compute partitions of each camera\;
$\mathbb{S}_{T} \leftarrow$ convert \blue{$\mathbb{S}_{c}$} into partition sequences based on \blue{$\mathbb{T}$}\;
$\mathbb{R}_c \leftarrow$ sequential pattern mining on $\mathbb{S}_{T}$ with $m$ and $k$\;
$\mathbb{R} \leftarrow \emptyset$; $Buf \leftarrow \emptyset$\;
\ForEach{candidate $(Seq, O) \in \mathbb{R}_c$}{
    \ForEach{camera $c_i$ in Seq}{
        $\mathbb{O}_{\epsilon} \leftarrow$ compute groups of $\epsilon$-close objects in $O$ at $c_i$\;
        \For{camera $c_j \in \{ c_{i-1-d}, \ldots, c_{i-1} \}$}{
            \For{$(O_{\epsilon}, R_b) \in \mathbb{O}_{\epsilon} \times Buf_{c_j}$}{
                $R_{new} = \langle R_b.O \cap O_\epsilon, R_b.P \cup c_i \rangle$\;
                \For{each object $o \in R_{new}.O$}{
                    \lIf{distance($o$, $c_i$, $c_j$) $> d$}{discard $o$}
                }
                \lIf{$|R_{new}.O| < m$}{\textbf{continue}}
                $Buf_{c_i} \leftarrow Buf_{c_i} \cup \{R_{new}\}$\;
                \lIf{$|R_{new}.P| \geq k$}{$\mathbb{R} \leftarrow \mathbb{R} \cup \{R_{new}\}$}
            }
        }
        \lForEach{$O_{\epsilon} \in \mathbb{O}_{\epsilon}$}{$Buf_{c_i} \leftarrow Buf_{c_i} \cup \{ \langle O_{\epsilon}, [c_i] \rangle \}$}
    }
}
$\mathbb{R} \leftarrow$ remove non-maximal patterns from $\mathbb{R}$\;
\textbf{return} $\mathbb{R}$\;
\end{algorithm}

\section{Candidate Enumeration Framework}\label{sec:cls-growth}
The drawback of FRB algorithm is that its filter-and-refine framework incurs substantial computation overhead for candidate verification. As will be empirically examined in Section 7, the candidate verification becomes the performance bottleneck in FRB algorithm. To address the challenge, we present MaxGrowth algorithm, which eliminates the requirement for candidate verification. In this section, we first present its candidate enumeration scheme to find valid candidate patterns, and in the next section, we present the pruning rules to efficiently eliminate patterns that are non-maximal.

The core idea of MaxGrowth is to treat the co-movement pattern as a sequence of fundamental units (clusters) and iteratively grow such a sequence for pattern generation. This scheme contrasts with the traditional filter-and-refinement framework, which treats the co-movement pattern as an overall combination of objects with their common route and performs refinement for pattern generation. In other words, MaxGrowth decomposes the co-movement pattern mining problem into a series of feasible cluster selection problems during candidate enumeration. The expensive verification procedure is avoided without generating false positives. \blue{We next introduce the cluster representation of the co-movement pattern in Subsection~\ref{subsec:maxgrowth-cls} and then elaborate on the candidate enumeration scheme of MaxGrowth in Subsection~\ref{subsec:maxgrowth-scheme}.}

\subsection{Cluster Representation}\label{subsec:maxgrowth-cls}
\blue{As the fundamental unit in MaxGrowth,} a cluster is essentially an aggregation of a set of objects and a traversed camera.

\begin{definition}\label{def:cls} \textbf{Cluster} \\
A cluster is defined as $CL = (O, c)$, if $|O| \geq m$ and $O$ is $\epsilon$-close at $c$.
\end{definition}

We use $CL.O$ and $CL.c$ to denote the object set and the traversed camera of a given cluster $CL$, respectively.
Besides, it is sometimes necessary to determine the specific position of an object as it traverses the camera of a given cluster in order to analyze the positional relationships between clusters. Therefore, we use the notation $\boldsymbol{p^i_j}$ to represent the position of object $o_j$ in its travel path $P_j$ when passing through the camera in cluster $CL_i$.

\begin{example}
In the example of Figure~\ref{fig:run-example}, if we set $m = 2$ and $\epsilon = 6$, then the object set $\{o_1, o_2, o_3\}$ contains more than 2 objects and is $\epsilon$-close at camera $B$. Thus, $(\{o_1, o_2, o_3\}, B)$ can be a cluster, which is shown as $CL_2$ in Figure~\ref{fig:cluster-example}. Meanwhile, since $o_1$'s travel path is $P_1 = (A, [1, 6]) \rightarrow (B, [15, 25]) \rightarrow (D, [35, 39])$, and $o_1$ belongs to $CL_2.O$ at the second position of $P_1$, we have $p^2_1 = 2$.
\end{example}

\blue{Intuitively, a cluster describes the co-movement scene under a single camera. To further characterize the co-movement pattern along the common route, we need to link the clusters in a reasonable manner. To this end, we now present several essential definitions related to the cluster sequence.}

\begin{definition}\label{def:core-object} \blue{\textbf{Core Object}} \\
We denote object $o_c$ as a core object of the cluster sequence $S$ if:
\begin{itemize}
    \item $o_c \in \bigcap_{i=1}^{|S|} CL_i.O$.
    \item For $1 < i \leq |S|$, $0 < p^i_c - p^{i-1}_c \leq d + 1$.
\end{itemize}
\end{definition}

Here, $|S|$ represents the number of clusters (length) in the cluster sequence $S = [CL_1, \dots, CL_{|S|}]$. We further use \blue{the notation $\boldsymbol{\lambda(S)}$} to denote all the core objects in $S$. Actually, $\lambda(S)$ indicates the group of objects that is $\epsilon$-close along a common $d$-subpath constructed from the cameras in $S$.

\begin{example}
Given parameters $m = 2$ and $d = 1$ in Figure~\ref{fig:cluster-example}, we take the cluster sequence $S = [CL_1, CL_2, CL_3]$ as an example. We have $CL_1.O \cap CL_2.O \cap CL_3.O = \{o_1, o_2, o_3\}$. Meanwhile, the conditions $0 < p^3_1 - p^2_1 < 2$ and $0 < p^2_1 - p^1_1 < 2$ hold. Therefore, we can conclude that $o_1$ is a core object of $S$. Similarly, $\lambda(S) = \{o_1, o_2, o_3\}$.
\end{example}


\begin{definition}\label{def:feasible-cls} \textbf{Feasible Cluster} \\
We call cluster $CL_i$ a feasible cluster for the cluster sequence $S = [CL_1,\dots, CL_{n}]$ if there exist at least $m$ objects $o_c \in CL_i.O$ such that $o_c \in \lambda(S)$ and $0 < p_c^i - p_c^{n} \leq d + 1$.
\end{definition}
Besides, we also define the cluster sequence $S = [CL_1, ..., CL_n]$ as a \blue{\textbf{feasible sequence}} if the last cluster $CL_n$ in $S$ is a feasible cluster for the prefix subsequence $[CL_1, ..., CL_{n-1}]$.

\begin{example}
We use Figure~\ref{fig:cluster-example} for illustration with parameters $m = 2$ and $d = 1$. The cluster $CL_3$ is a feasible cluster for $[CL_1, CL_2]$. This is because $CL_3.O \cap \lambda(S) = \{o_1, o_2, o_3\}$. Furthermore, we have $p^3_1 - p^2_1 = 1$, $p^3_2 - p^2_2 = 1$, and $p^3_3 - p^2_3 = 2$. Thus, there exist three objects (greater than $m$) that satisfy the conditions specified in Definition~\ref{def:feasible-cls}.
As a result, we conclude that $[CL_1, CL_2, CL_3]$ is a feasible sequence.
In contrast, $CL_4$ is not a feasible cluster for $[CL_1, CL_2]$ because $CL_1.O \cap CL_2.O \cap CL_4.O$ are $\{o3\}$ which consists of only one object.
\end{example}

The following lemma demonstrates the equivalence between the feasible sequence and the candidate co-movement pattern.

\begin{lemma}\label{lm:feasible-cls}
For a feasible sequence $S = [CL_1, \ldots, CL_n]$, the corresponding $\langle \lambda(S), CL_1.c \to \ldots \to CL_n.c \rangle$ must be a candidate co-movement pattern of length $n$ that satisfies $m$, $\epsilon$, and $d$.
\end{lemma}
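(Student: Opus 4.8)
The plan is to verify, for a feasible sequence $S=[CL_1,\dots,CL_n]$, each defining condition of a candidate co-movement pattern for $\langle \lambda(S),\, CL_1.c\to\cdots\to CL_n.c\rangle$ in turn: that $|\lambda(S)|\ge m$ (the ``$m$'' part), that $\lambda(S)$ is $\epsilon$-close at every camera $CL_i.c$ (the ``$\epsilon$'' part), that $P:=CL_1.c\to\cdots\to CL_n.c$ is a $d$-subpath of $P_c$ for every $o_c\in\lambda(S)$ (the ``$d$'' part), and finally that $|P|=n$. None of these individually is deep; the work is in chasing the definitions of cluster, core object, and feasible cluster correctly.

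First, for the cardinality bound I would unwind the definition of feasible sequence. Since $S$ is feasible, $CL_n$ is a feasible cluster for the prefix $S'=[CL_1,\dots,CL_{n-1}]$, so by Definition~\ref{def:feasible-cls} there are at least $m$ objects $o_c\in CL_n.O$ with $o_c\in\lambda(S')$ and $0<p^n_c-p^{n-1}_c\le d+1$. The key observation is that each such $o_c$ is in fact a core object of the \emph{whole} sequence $S$: membership in $\lambda(S')$ already gives $o_c\in\bigcap_{i=1}^{n-1}CL_i.O$ together with $0<p^i_c-p^{i-1}_c\le d+1$ for $1<i\le n-1$, and the two extra facts $o_c\in CL_n.O$ and $0<p^n_c-p^{n-1}_c\le d+1$ extend both clauses of Definition~\ref{def:core-object} up to index $n$. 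Hence these $\ge m$ objects all lie in $\lambda(S)$, giving $|\lambda(S)|\ge m$ (for $n=1$ the claim is immediate from Definition~\ref{def:cls}). Next, $\epsilon$-closeness is monotone in the object set: for each $i$, the first clause of Definition~\ref{def:core-object} gives $\lambda(S)\subseteq CL_i.O$, and $CL_i$ being a cluster means $CL_i.O$ is $\epsilon$-close at $CL_i.c$ (Definition~\ref{def:cls}); since the pairwise timestamp-gap condition of Definition~\ref{def:eps} is inherited by every subset, $\lambda(S)$ is $\epsilon$-close at $CL_i.c$, which is exactly the $i$-th camera of $P$. For the $d$-subpath claim I would fix $o_c\in\lambda(S)$ and take the map $f(i)=p^i_c$ on $\{1,\dots,n\}$: it is injective because the second clause of Definition~\ref{def:core-object} forces $p^1_c<p^2_c<\cdots<p^n_c$, it satisfies $f(i+1)-f(i)=p^{i+1}_c-p^i_c\le d+1$, and the camera at position $f(i)=p^i_c$ of $P_c$ is by definition $CL_i.c$, i.e.\ the $i$-th camera of $P$; the time-interval containment follows since the cameras of $P$ occur at positions $p^1_c<\cdots<p^n_c$ of $P_c$, so the span of $P$ along $o_c$ lies inside the time interval of $P_c$. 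Finally, $P$ is a list of $n$ cameras, and the strict monotonicity $p^1_c<\cdots<p^n_c$ (for any $o_c\in\lambda(S)$, which is nonempty by the cardinality step) shows they are pairwise distinct, so $|P|=n$.

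The only genuinely delicate point is the first step. One must resist treating feasibility as a recursive hypothesis on the prefix, since the definition of feasible sequence only constrains the last cluster, not the prefix; instead the argument is direct, observing that the objects witnessed by the feasibility of $CL_n$ are already core objects of the prefix and satisfy the single additional gap inequality to $CL_n$, and are therefore promoted to core objects of the full sequence $S$. Everything after that is a routine substitution into Definitions~\ref{def:eps}, \ref{def:cls}, and the $d$-subpath definition.
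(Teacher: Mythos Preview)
Your proof is correct and follows essentially the same approach as the paper: both use feasibility of $CL_n$ for the prefix to obtain $\ge m$ witnesses in $\lambda(S')$, promote them to core objects of the full $S$, invoke cluster membership for $\epsilon$-closeness, and take the injection $f(i)=p^i_c$ for the $d$-subpath condition. If anything you are more careful than the paper's version, explicitly handling the $n=1$ base case, the monotonicity of $\epsilon$-closeness under subsets, the time-interval containment, and the distinctness argument giving $|P|=n$.
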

\begin{proof}
We first consider the parameter $m$. Since $S$ is a feasible sequence, $CL_n$ must be a feasible cluster for $S_{n-1} = [CL_1, \dots, CL_{n-1}]$. In other words, there are at least $m$ objects $o_c$ in $CL_n \cap \lambda(S_{n-1})$ such that $0 < p_c^n - p_c^{n-1} \leq d + 1$. As $\lambda(S_{n-1})$ contains all the core objects of $S_{n-1}$, we further have $CL_n \cap \lambda(S_{n-1}) = \bigcap_{i=1}^{n} CL_i.O$ and $0 < p_c^i - p_c^{i-1} \leq d + 1 (1 < i \leq n)$. This implies that all the aforementioned at least $m$ objects are core objects of $S$.
As for the parameter $\epsilon$, the core objects of $\lambda(S)$ remain within the same cluster, which ensures that they are $\epsilon$-close at each camera.
Finally, we analyze the parameter $d$. For each core object $o_c \in \lambda(S)$, we can define an injection function $f: \{1, 2, \ldots, n\} \to \{p_c^1, p_c^2, \ldots, p_c^n\}$ from the common route $CL_1.c \to \ldots \to CL_n.c$ to $o_c$'s travel path $P_c$, such that the common route is a $d$-subpath of $P_c$.
The proof is complete.
\end{proof}

We will henceforth use \blue{the notation $\boldsymbol{R(S)}$} to denote the corresponding candidate co-movement pattern $\langle \lambda(S), CL_1.c \to \ldots \to CL_n.c \rangle$ for a feasible sequence $S = [CL_1, \ldots, CL_n]$. For example, in Figure~\ref{fig:cluster-example} with parameters $m = 2$ and $d = 1$, For the feasible sequence $S = [CL_1, CL_2, CL_3]$, $R(S)$ is $\langle \{o_1, o_2, o_3\}, A \to B \to D \rangle$.

\begin{figure}[t!]
	\centering
		\includegraphics[width=0.35\textwidth]{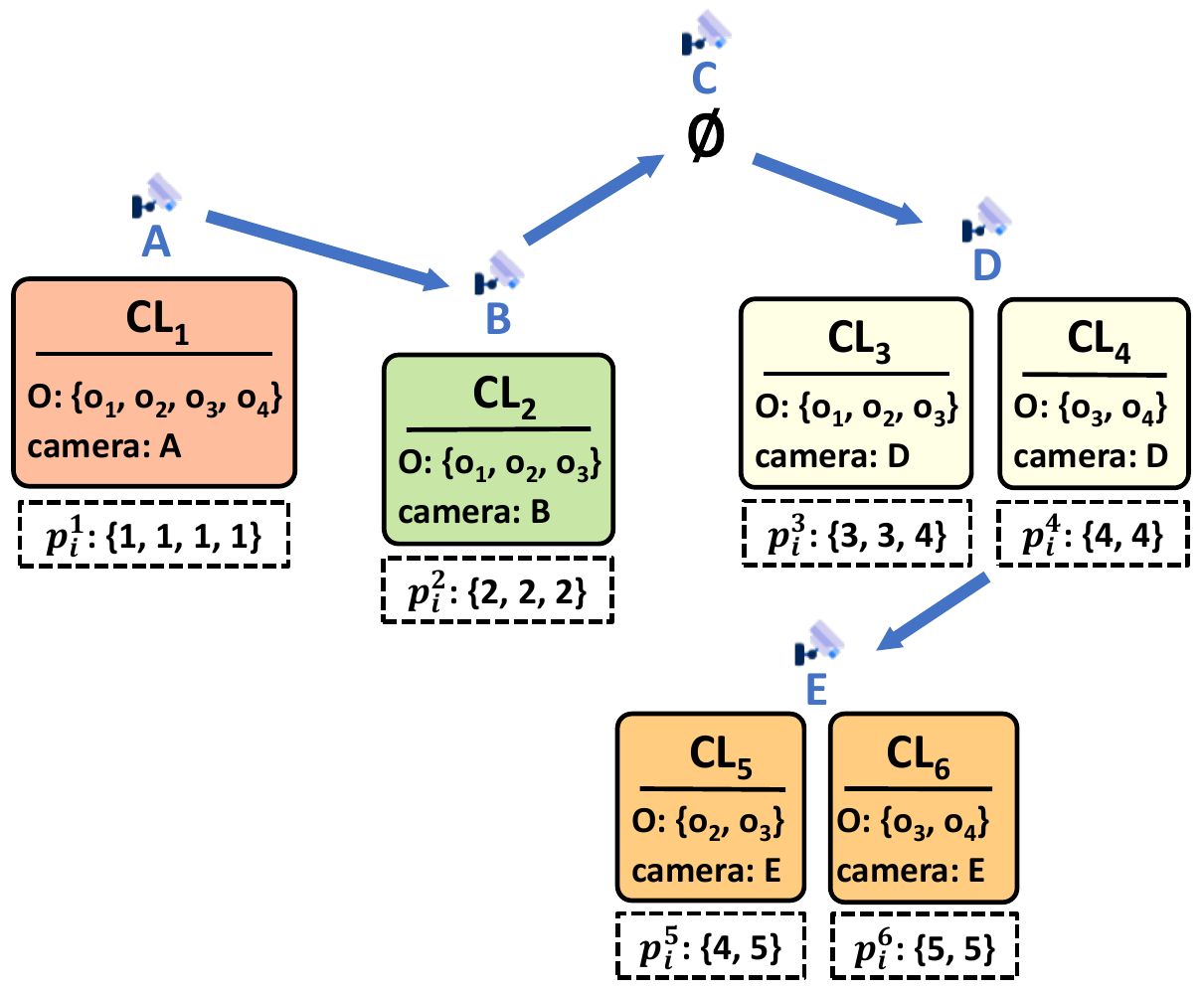}
	\caption{Example clusters under parameters $m = 2$ and $\epsilon = 6$.}
	\label{fig:cluster-example}
        \vspace{-4mm}
\end{figure}

\subsection{The Candidate Enumeration Scheme}\label{subsec:maxgrowth-scheme}
\blue{Based on the aforementioned cluster representation, the candidate enumeration scheme of MaxGrowth enumerates feasible sequences in a depth-first search manner for candidate generation.} Specifically, it starts with an empty sequence and selects feasible clusters for the current cluster sequence to grow a longer feasible sequence at each search step. Meanwhile, candidate co-movement patterns are retrieved from the enumerated feasible sequences.

\blue{Algorithm~\ref{alg:mgrowth} presents the complete pseudocode of MaxGrowth.} We first initialize $\mathbb{R}$ as the final result set (line 1). Next, the candidate enumeration scheme of MaxGrowth is executed. Specifically, all the clusters are computed from the input travel paths according to Definition~\ref{def:cls} for enumeration preparation (line 2). Each cluster is then used as the first element of the current cluster sequence $S$ to initiate the Growth routine for subsequent steps in candidate enumeration (lines 3-5). Finally, we obtain the final result set $\mathbb{R}$ after removing non-maximal patterns (lines 6-7). This step uses a technique similar to that in \cite{Zhang2023pvldb}.
We now continue to provide a detailed description of the Growth routine (lines 8-23), which is the core component of MaxGrowth's candidate enumeration scheme. 
In this routine, the feasible clusters for the current cluster sequence $S$ are first computed in lines 9-16. \blue{Specifically, we iterate over each core object $o_c$ of $S$, identifying candidate clusters that contain $o_c$ and satisfy the gap tolerance condition $0 < p_c^i - p_c^n \leq d + 1$ (lines 10-11).} $o_c$ must be one of the objects that satisfy the conditions specified in Definition~\ref{def:feasible-cls} within these candidate clusters. Therefore, each pair of candidate cluster and $o_c$ is recorded in the auxiliary dictionary $\mathrm{DT}$ (lines 12-14). If a candidate cluster contains at least $m$ core objects recorded in $\mathrm{DT}$, it is considered a feasible cluster and added to $\mathbb{FCL}$ (lines 15-16). 
After obtaining all feasible clusters, each of them is selected and appended to $S$ to continue the sequence growth routine (lines 17-20).
As the final step of the Growth routine, we check if the length of the current cluster sequence is at least $k$. If so, the corresponding candidate co-movement pattern $R(S)$ is directly added to the result set $\mathbb{R}$ as a valid pattern.

\begin{algorithm}[t!]
\SetAlgoNoEnd \SetAlgoNoLine 
\caption{\blue{MaxGrowth}}\label{alg:mgrowth}
\SetKwFunction{DFS}{Growth}
\SetKwProg{Fn}{Routine}{:}{}

$\mathbb{R} \leftarrow \emptyset$\;
$\mathbb{CL} \leftarrow$ compute all clusters using parameters $m$ and $\epsilon$\;
\ForEach{cluster $CL \in \mathbb{CL}$}{
    $S \leftarrow [CL]$\;
    \DFS{$S, \mathbb{R}$};
}
$\mathbb{R} \leftarrow$ remove non-maximal patterns in $\mathbb{R}$\;
\textbf{return} $\mathbb{R}$\;
\BlankLine

\Fn{\DFS{$S = [CL_1, \dots, CL_n], \mathbb{R}$}}{
    $\mathbb{FCL} \leftarrow \emptyset$; $\mathrm{DT} \leftarrow$ empty dictionary\;
    \ForEach{core object $o_c \in \lambda(S)$}{
	$\mathbb{CL}_c \leftarrow \{CL_i \mid o_c \in CL_i.O \land 0 < p^i_c - p^n_c \leq d + 1 \}$\;
	\ForEach{candidate cluster $CL_i \in \mathbb{CL}_c$}{
	   \lIf{$CL_i \notin \mathrm{DT}$}{$\mathrm{DT}[CL_i] \leftarrow \emptyset$}
	   $\mathrm{DT}[CL_i] \leftarrow \mathrm{DT}[CL_i] \cup \{ o_c \}$\;
	}
    }
    \ForEach{candidate cluster $CL_i \in \mathrm{DT}$}{
        \lIf{$|\mathrm{DT}[CL_i]| \geq m$}{$\mathbb{FCL} \leftarrow \mathbb{FCL} \cup \{ CL_i \}$}
    }
    \ForEach{feasible cluster $FCL \in \mathbb{FCL}$}{
        Append $FCL$ to $S$\;
        \DFS{$S, \mathbb{R}$}\;
        Discard $FCL$ from $S$\;
    }
    \lIf{$|S| \geq k$}{
        $\mathbb{R} \leftarrow \mathbb{R} \cup R(S)$
    }
}
\end{algorithm}

From the above description, we can see that MaxGrowth derives valid co-movement patterns directly from the execution of the candidate enumeration scheme, thus eliminating the need for the subsequent expensive verification procedure. 
Moreover, during candidate enumeration, MaxGrowth searches only within the feasible clusters relevant to the current cluster sequence. This well-restricted search space typically shrinks as the sequence grows, further ensuring the efficiency of pattern growth.

We now justify the correctness and completeness of MaxGrowth in detail. First, the following lemma demonstrates its correctness.

\begin{lemma}\label{lm:mg-correct}
MaxGrowth always generates valid co-movement patterns satisfying parameters $m$, $k$, $d$ and $\epsilon$ from candidate enumeration.
\end{lemma}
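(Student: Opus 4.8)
The plan is to show that every pattern MaxGrowth adds to $\mathbb{R}$ satisfies all four parameters $m$, $k$, $d$, and $\epsilon$. A pattern is inserted into $\mathbb{R}$ only at the final line of the Growth routine, namely when $|S| \geq k$ and we add $R(S)$. So it suffices to argue two things: (i) whenever Growth is invoked on a cluster sequence $S$, that $S$ is a \emph{feasible sequence} in the sense of Definition~\ref{def:feasible-cls}; and (ii) given (i), the guard $|S| \geq k$ together with Lemma~\ref{lm:feasible-cls} yields the claim. Step (ii) is immediate: Lemma~\ref{lm:feasible-cls} already tells us that for a feasible sequence $S = [CL_1, \dots, CL_n]$ the pair $R(S) = \langle \lambda(S), CL_1.c \to \dots \to CL_n.c\rangle$ is a candidate co-movement pattern of length $n$ satisfying $m$, $\epsilon$, and $d$; adding the guard $n = |S| \geq k$ supplies the remaining constraint $|P| \geq k$, so $R(S)$ is a valid VPlatoon pattern. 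Hence the entire burden falls on step (i).

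For step (i) I would induct on the recursion depth / length of $S$. Base case: Growth is first called from line 5 with $S = [CL]$ for some $CL \in \mathbb{CL}$; a length-one sequence is vacuously feasible (there is no last cluster to check against a nonempty prefix, and $\lambda([CL]) = CL.O$ has size $\geq m$ by Definition~\ref{def:cls}). Inductive step: suppose Growth is called on a feasible sequence $S = [CL_1,\dots,CL_n]$, and consider a recursive call on $S' = [CL_1,\dots,CL_n, FCL]$ made in the loop of lines 17--20. By construction $FCL \in \mathbb{FCL}$, which means (lines 15--16) that $|\mathrm{DT}[FCL]| \geq m$. I then need to verify that the objects accumulated in $\mathrm{DT}[FCL]$ are exactly witnesses for $FCL$ being a feasible cluster for $S$: each such object $o_c$ was placed in $\mathrm{DT}[FCL]$ (lines 10--14) precisely because $o_c \in \lambda(S)$, $o_c \in FCL.O$, and $0 < p^{FCL}_c - p^n_c \leq d+1$ — i.e. $o_c$ satisfies the conditions in Definition~\ref{def:feasible-cls}. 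Since there are at least $m$ such objects, $FCL$ is a feasible cluster for $S$, and therefore $S'$ is a feasible sequence. By induction every sequence on which Growth is invoked is feasible, establishing (i) and hence the lemma.

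I anticipate the main obstacle to be the bookkeeping in the inductive step: making the indexing in $\mathrm{DT}$ line up cleanly with Definition~\ref{def:feasible-cls}. Concretely, I must confirm that (a) the $\mathbb{CL}_c$ computed in line 11 ranges over exactly the clusters that could serve as the next cluster with $o_c$ as a witnessing core object (i.e. no feasible witness is missed, which matters for completeness but not for this correctness lemma), and (b) conversely that \emph{every} object counted in $\mathrm{DT}[FCL]$ genuinely lies in $\lambda(S)$ — this holds because the outer loop of lines 10--16 ranges only over $o_c \in \lambda(S)$, so nothing spurious is ever inserted. A secondary subtlety is the implicit claim that $\lambda(S')$ contains those $\geq m$ witnesses, so that the recursive call's own $\lambda$ is well-defined and of size $\geq m$; but this is exactly the content of the first half of the proof of Lemma~\ref{lm:feasible-cls} (the argument that a feasible cluster's witnesses become core objects of the extended sequence), so I would simply cite it rather than re-derive it. With that citation in hand the proof is short and essentially a structural induction over the search tree.
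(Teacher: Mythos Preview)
Your proposal is correct and follows essentially the same approach as the paper's proof: the paper argues in one sentence that ``MaxGrowth only uses feasible clusters to grow the cluster sequence, \emph{[so]} each enumerated sequence $S$ must be a feasible sequence,'' then invokes Lemma~\ref{lm:feasible-cls} for $m,d,\epsilon$ and the length guard for $k$. Your version simply unpacks the first step into an explicit structural induction over the recursion and verifies that the bookkeeping in $\mathrm{DT}$ matches Definition~\ref{def:feasible-cls}; this extra rigor is fine but not a different route.
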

\begin{proof}
\blue{Since MaxGrowth only uses feasible clusters to grow the cluster sequence, each enumerated sequence $S$ must be a feasible sequence.} According to Lemma~\ref{lm:feasible-cls}, $S$ corresponds to a candidate co-movement pattern $R(S)$ that satisfies the parameters $m$, $d$, and $\epsilon$. Meanwhile, as the length of the common route in $R(S)$ is the same as that of $S$, and MaxGrowth generates candidate patterns only for corresponding sequences with a length of at least $k$, $R(S)$ must also satisfy the parameter $k$.
\end{proof}

The following lemma implies the completeness of MaxGrowth.

\begin{lemma}\label{lm:cgr-complete}
A valid relaxed co-movement pattern corresponds to at least one cluster sequence enumerated by MaxGrowth.
\end{lemma}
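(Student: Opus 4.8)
The plan is to prove completeness by starting with an arbitrary valid relaxed co-movement pattern $R = \langle O, P \rangle$ with $P = c_1 \to c_2 \to \ldots \to c_n$ (so $n \geq k$ and $|O| \geq m$), and explicitly constructing a cluster sequence $S = [CL_1, \ldots, CL_n]$ that MaxGrowth enumerates and whose associated pattern $R(S)$ "covers" $R$ in the sense that $O \subseteq \lambda(S)$ and $P$ is the common route of $R(S)$. The natural construction is, for each position $t \in \{1, \ldots, n\}$, to take the partition-style maximal $\epsilon$-close group at camera $c_t$ that contains $O$; concretely, let $O_t$ be a group of $\epsilon$-close objects at $c_t$ with $O \subseteq O_t$ and $|O_t| \geq m$ (such a group exists because $O$ itself is $\epsilon$-close at $c_t$ by condition 1 of the pattern definition, and $|O| \geq m$ — in the simplest version one can just take $O_t = O$). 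Then set $CL_t = (O_t, c_t)$; this is a legal cluster by Definition~\ref{def:cls}.

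Next I would verify that $S = [CL_1, \ldots, CL_n]$ is actually a feasible sequence, and moreover that every prefix $[CL_1, \ldots, CL_t]$ is feasible, which is exactly what is needed for the Growth routine to reach $S$: the algorithm starts each DFS branch from a singleton $[CL_1]$ (line 4, since $CL_1 \in \mathbb{CL}$), and at each step appends a feasible cluster (lines 17--20). The key claim to establish is that for every $t$, the objects of $O$ are core objects of $[CL_1, \ldots, CL_t]$, i.e. $O \subseteq \lambda([CL_1, \ldots, CL_t])$. For the first bullet of Definition~\ref{def:core-object} this follows from $O \subseteq CL_i.O = O_i$ for all $i$. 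For the second bullet, $0 < p^i_c - p^{i-1}_c \leq d+1$ for each $o_c \in O$: here I would invoke condition 2 of the pattern definition, namely that $P$ is a $d$-subpath of each $P_c$. By the definition of $d$-subpath there is an injection $f_c$ with $f_c(t+1) - f_c(t) \leq d+1$ and $c^c_{f_c(t)} = c_t$; and the position $p^t_c$ of $o_c$ at camera $c_t$ in $P_c$ is precisely $f_c(t)$ (this requires a small argument that the position is well-defined, e.g. using the time-interval containment and the strictly increasing entrance times so the matching position is unique). Hence $p^t_c - p^{t-1}_c = f_c(t) - f_c(t-1)$, which is positive (since $f_c$ respects order) and at most $d+1$. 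This shows $O \subseteq \lambda([CL_1, \ldots, CL_t])$, and since $|O| \geq m$ we get that $CL_t$ is a feasible cluster for $[CL_1, \ldots, CL_{t-1}]$ (taking the $m$ witnessing objects from $O$), so each prefix is a feasible sequence.

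Then it remains to observe that MaxGrowth actually outputs a pattern covering $R$: since $[CL_1, \ldots, CL_n]$ is feasible and $n \geq k$, the Growth routine appends $R(S)$ to $\mathbb{R}$ at its final step (line 21), and by Lemma~\ref{lm:feasible-cls}, $R(S) = \langle \lambda(S), c_1 \to \ldots \to c_n \rangle$ with $O \subseteq \lambda(S)$. Thus the valid pattern $R$ corresponds to the enumerated cluster sequence $S$. I would also note that this $R(S)$ may strictly dominate $R$ (it can have more objects), but that is fine for completeness and is exactly why the non-maximal removal step at the end is both sound and does not discard genuinely maximal patterns — though a careful statement might want the stronger claim that every maximal pattern equals some $R(S)$, which would need the additional observation that if $R$ is maximal then the construction with $O_t$ chosen maximally yields $\lambda(S) = O$.

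The main obstacle I anticipate is the bookkeeping around positions: making precise that $p^t_c$ (the position in $P_c$ at which $o_c$ passes camera $c_t$) is well-defined and equals the value $f_c(t)$ supplied by the $d$-subpath injection. A camera could in principle be visited more than once in a travel path, so one must argue using the time-interval containment condition in the $d$-subpath definition together with the strictly increasing entrance timestamps ($s_j < s_{j+1}$) to pin down the unique position, and check that the injection $f_c$ can be taken to be this position map simultaneously for the purposes of both the core-object condition and the feasible-cluster condition $0 < p^i_c - p^n_c \leq d+1$. Everything else — legality of the clusters, the prefix-feasibility induction, and the final length check — is routine once the position map is nailed down.
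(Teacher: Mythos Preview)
Your proposal is correct and follows essentially the same approach as the paper: construct a cluster sequence $S=[CL_1,\ldots,CL_n]$ with $O\subseteq CL_t.O$ and $CL_t.c$ the $t$-th camera of $P$, then use the $d$-subpath condition to show the objects of $O$ are core objects and hence each $CL_t$ is feasible for its prefix. Your version is in fact more careful than the paper's---you explicitly verify prefix feasibility (needed for the DFS to reach $S$) and flag the position-uniqueness bookkeeping, both of which the paper glosses over by simply asserting ``MaxGrowth will enumerate all feasible sequences'' and checking only that $CL_n$ is feasible for $[CL_1,\ldots,CL_{n-1}]$.
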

\begin{proof}
Let $R = \langle O, P \rangle$ be a relaxed co-movement pattern. Since $O$ satisfies $\epsilon$-close at each camera of $P$, there exists at least one cluster sequence $S = [CL_1, CL_2, \ldots, CL_n]$ such that $n = |P|$ and for $1 \leq i \leq n$, $O \subseteq CL_i.O$ and $CL_i.c$ is the $i$-th camera of $ P$. Since MaxGrowth will enumerate all feasible sequences, we now only need to prove that $S$ is a feasible sequence. First, we know $O \subseteq \bigcap_{i=1}^{n} CL_i.O$ contains at least $m$ objects. Meanwhile, since $P$ is the $d$-subpath of the travel path for each object in $O$, then $\forall o_j \in O$, $0 < p_j^i - p_j^{i-1} \leq d+1$ holds for $1 < i \leq n$. This further implies that $CL_n$ is a feasible cluster for the prefix subsequence $[CL_1, \dots CL_{n-1}]$ of $S$. This completes the proof.
\end{proof}

We can now derive the following theorem which guarantees the correctness and completeness of MaxGrowth.
\begin{theorem}
The MaxGrowth framework can generate both valid and complete relaxed co-movement patterns.
\end{theorem}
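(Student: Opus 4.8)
The plan is to derive the theorem directly from the two preceding lemmas, Lemma~\ref{lm:mg-correct} (correctness) and Lemma~\ref{lm:cgr-complete} (completeness), together with a short argument about the final non-maximal removal step in Algorithm~\ref{alg:mgrowth}. For the validity part, I would first observe that every pattern written into $\mathbb{R}$ inside the Growth routine is of the form $R(S)$ for an enumerated cluster sequence $S$, and since MaxGrowth only ever extends $S$ with feasible clusters, every such $S$ is a feasible sequence; Lemma~\ref{lm:mg-correct} then gives that $R(S)$ satisfies $m$, $d$ and $\epsilon$, and the explicit guard $|S|\geq k$ before emission gives parameter $k$. Because the concluding line of Algorithm~\ref{alg:mgrowth} only deletes elements of $\mathbb{R}$ and never creates new ones, the returned set still consists solely of valid relaxed co-movement patterns.

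For the completeness part, I would take an arbitrary valid pattern $R=\langle O,P\rangle$ and invoke Lemma~\ref{lm:cgr-complete} to obtain an enumerated feasible sequence $S=[CL_1,\dots,CL_n]$ with $n=|P|$, with $CL_i.c$ the $i$-th camera of $P$, and with $O\subseteq\bigcap_{i=1}^{n}CL_i.O$. The gap conditions of Definition~\ref{def:core-object} hold for every object of $O$ because $P$ is a $d$-subpath of each of their travel paths, so $O\subseteq\lambda(S)$, and since $|S|=|P|\geq k$ the pattern $R(S)=\langle\lambda(S),CL_1.c\to\dots\to CL_n.c\rangle$ is emitted into $\mathbb{R}$. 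As $O\subseteq\lambda(S)$ and $P$ is trivially a $d$-subpath of itself, $R(S)$ dominates $R$ in the sense of Definition~\ref{def:max-pattern}; hence every valid pattern is covered by a pattern generated by MaxGrowth before the removal step.

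To finish, I would specialize to maximal patterns. If $R$ is maximal, the dominating $R(S)$ from the previous paragraph cannot be strictly larger, which forces $\lambda(S)=O$ and, since the witnessing sequence is built camera-by-camera along $P$ so that $R(S).P=P$ exactly, forces $R(S)=R$; thus $R$ already lies in $\mathbb{R}$ before removal and, being maximal, survives it, so $R$ appears in the final output. Combining the two directions yields that MaxGrowth returns a sound family that contains every maximal relaxed co-movement pattern, which is the assertion of the theorem. I do not anticipate a genuine difficulty: the substantive work was already carried out in Lemmas~\ref{lm:mg-correct} and~\ref{lm:cgr-complete}, and the only point needing care is the interaction with the non-maximal removal step — in particular the observation that a maximal pattern coincides with, rather than is merely dominated by, its witnessing candidate $R(S)$.
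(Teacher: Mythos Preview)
Your proposal is correct and takes essentially the same approach as the paper: the paper's own proof is the single sentence ``The proof is straightforward following from Lemma~\ref{lm:mg-correct} and Lemma~\ref{lm:cgr-complete},'' and your argument is precisely that derivation, spelled out in more detail. Your additional third paragraph about maximal patterns surviving the removal step is sound but goes beyond what the theorem as stated actually asserts---the theorem concerns the validity and completeness of the generated candidates, not the final maximal output---so that part is extra rather than essential.
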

\begin{proof}
\blue{The proof is straightforward following from Lemma~\ref{lm:mg-correct} and Lemma~\ref{lm:cgr-complete}.}
\end{proof}

\section{Maximal Pattern Mining}\label{sec:max-pattern}
Building upon the candidate enumeration scheme of MaxGrowth, \blue{we introduce two efficient pruning rules that eliminate the generation of most maximal patterns during candidate enumeration.}

\begin{figure}[t!]
	\centering
		\includegraphics[width=0.48\textwidth]{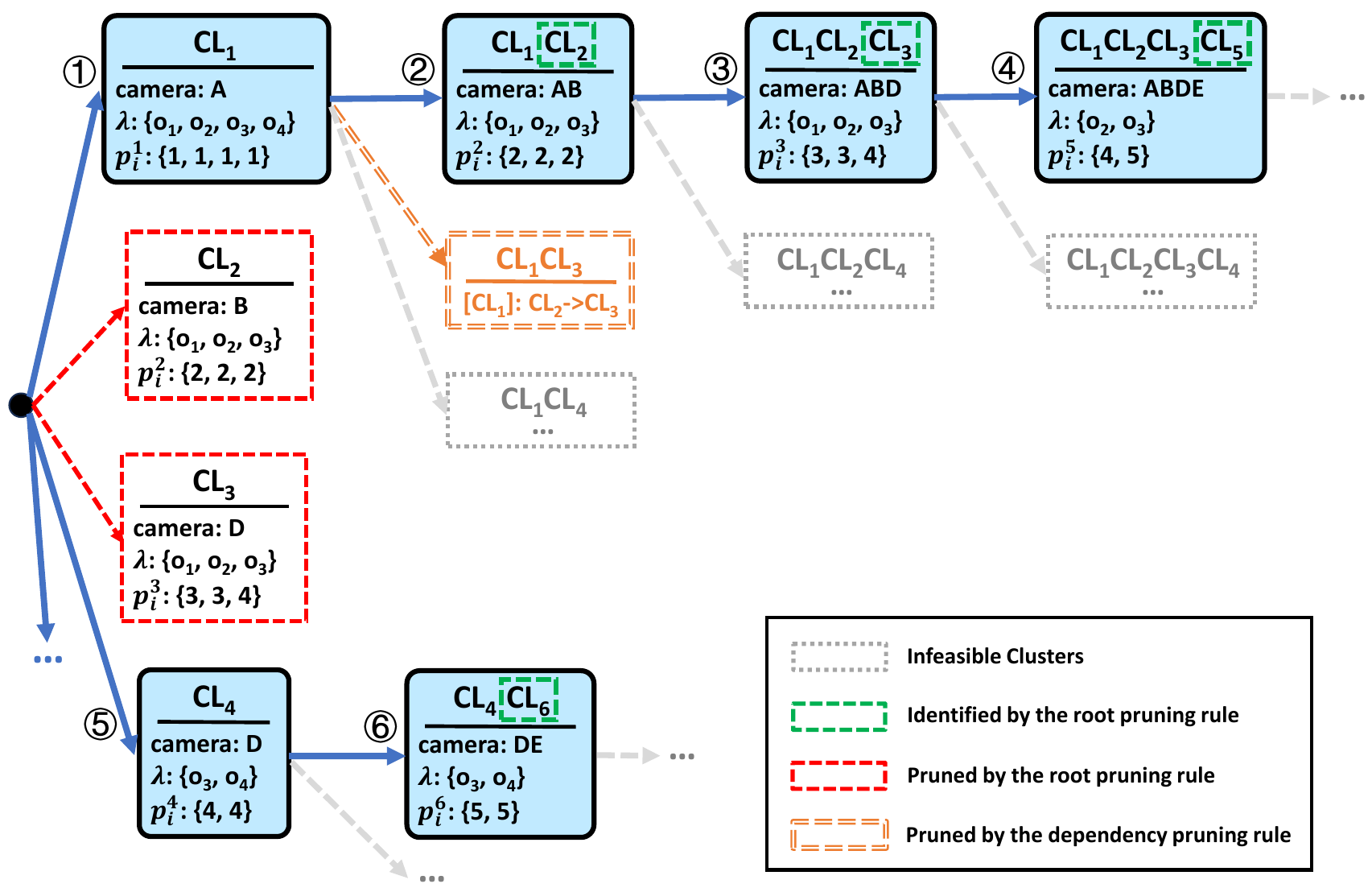}
	\caption{The search tree of MaxGrowth under parameters $m = 2$, $k = 2$, $d = 1$, and $\epsilon$ = 6.}
	\label{fig:search-tree}
\end{figure}

\subsection{The Root Pruning Rule}
\blue{The candidate enumeration scheme of MaxGrowth can be represented as a search tree, where each node corresponds to a selected feasible cluster and each branch represents a different direction of sequence growth.} Typically, there are a large number of clusters, resulting in an overwhelming number of subtrees at the root of the whole search tree. However, a significant number of these root subtrees cannot produce any maximal pattern. As shown in Figure~\ref{fig:search-tree}, only the search subtrees starting from clusters $CL_1$ and $CL_4$ can generate the maximal patterns $\langle \{o_2, o_3\}: A \rightarrow B \rightarrow D \rightarrow E \rangle$ and $\langle \{o_3, o_4\}: D \rightarrow E \rangle$, respectively.

To effectively prune those non-contributing subtrees for maximal pattern mining at the root of the search tree, we present the root pruning rule. The lemma below illustrates the core idea of it.

\begin{lemma}\label{lm:root-prune}
For a cluster $CL_i$, if there is a feasible sequence $S$ that $CL_i$ is a feasible cluster for $S$ and $CL_i.O \subseteq \lambda(S)$, then cluster sequences starting with $CL_i$ cannot correspond to maximal patterns.
\end{lemma}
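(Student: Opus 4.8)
The plan is to show that any candidate pattern arising from a cluster sequence $S'$ that starts with $CL_i$ is dominated (in the sense of Definition~\ref{def:max-pattern}) by a pattern obtained from a related sequence that starts one step "later," namely with the cluster immediately following $CL_i$ in the hypothesized witness sequence $S$. Concretely, suppose for contradiction that some cluster sequence $S' = [CL_i, CL'_2, \ldots, CL'_n]$ enumerated by MaxGrowth yields a maximal pattern $R(S') = \langle \lambda(S'), CL_i.c \to CL'_2.c \to \ldots \to CL'_n.c\rangle$. I would use the feasible sequence $S = [CL_1, \ldots, CL_m, CL_i]$ given by the hypothesis — here $CL_i$ is feasible for the prefix $[CL_1,\ldots,CL_m]$ and $CL_i.O \subseteq \lambda(S)$ — to build a strictly larger pattern, contradicting maximality.

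The key steps, in order, are as follows. First, I would record the two structural consequences of the hypothesis: (a) since $CL_i$ is feasible for $[CL_1,\ldots,CL_m]$, the sequence $S$ is a feasible sequence, so by Lemma~\ref{lm:feasible-cls} the pair $\langle \lambda(S), CL_1.c \to \ldots \to CL_m.c \to CL_i.c\rangle$ is a genuine candidate pattern; and (b) because $CL_i.O \subseteq \lambda(S)$, every core object of $S$ that lies in $CL_i$ — which is all of them by (a)'s intersection property — actually contributes to $\lambda(S)$, and in particular $\lambda(S) \supseteq CL_i.O \supseteq \lambda(S')$, since $\lambda(S') \subseteq CL_i.O$ by the definition of core object (Definition~\ref{def:core-object}). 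Second, I would splice $S$ and $S'$ together: form $\widehat{S} = [CL_1, \ldots, CL_m, CL_i, CL'_2, \ldots, CL'_n]$. I would argue $\widehat{S}$ is feasible by verifying the core-object gap conditions hold both across the $CL_m \to CL_i$ junction (from feasibility of $S$) and along the $CL_i \to CL'_2 \to \ldots$ tail (from feasibility of $S'$), so that the core objects of $S'$ remain core objects of $\widehat{S}$; thus $\lambda(\widehat{S}) \supseteq \lambda(S')$, with a strictly longer route. Third, I would invoke Lemma~\ref{lm:feasible-cls} to get that $R(\widehat{S})$ is a valid candidate pattern, observe that $R(S').P = CL_i.c \to \ldots \to CL'_n.c$ is a $d$-subpath of $R(\widehat{S}).P = CL_1.c \to \ldots \to CL'_n.c$ (the obvious suffix injection shifted by $m$, with gap $1 \le d+1$), and that $R(S').O = \lambda(S') \subseteq \lambda(\widehat{S}) = R(\widehat{S}).O$. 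By Definition~\ref{def:max-pattern} this contradicts the maximality of $R(S')$, so no maximal pattern can start with $CL_i$, completing the proof.

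The main obstacle I anticipate is the careful bookkeeping around core objects and the position quantities $p^i_c$ when gluing $S$ and $S'$: I must confirm that an object $o_c \in \lambda(S')$ is still a core object of the concatenated sequence $\widehat{S}$, which requires not only that $o_c \in CL_j.O$ for every cluster $CL_j$ in $\widehat{S}$ (this follows from $\lambda(S') \subseteq CL_i.O \subseteq \lambda(S) \subseteq \bigcap_j CL_j.O$ over the prefix, plus $o_c \in \lambda(S')$ over the suffix), but crucially that the position gaps $0 < p^{j}_c - p^{j-1}_c \le d+1$ chain correctly — in particular at the seam where we pass from $CL_i$ (the last cluster of $S$'s "view") into $CL'_2$. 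That gap is controlled precisely by the feasibility of $S'$ at its first growth step, so the argument goes through, but stating it cleanly requires being explicit that the position of $o_c$ at camera $CL_i.c$ is the same whether we think of $CL_i$ as the tail of $S$ or the head of $S'$ — it is a property of the fixed travel path $P_c$, not of the sequence — and that $\lambda(S) \cap \lambda(S')$ is nonempty with at least $m$ elements so that the resulting pattern still satisfies the $m$ constraint. I would make that single observation prominent and let the rest follow routinely.
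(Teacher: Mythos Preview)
Your proposal takes essentially the same approach as the paper: prepend the witness sequence to any feasible sequence $S'=[CL_i,CL'_2,\ldots,CL'_n]$ starting at $CL_i$, then argue that the concatenation $\widehat{S}$ has $\lambda(\widehat{S})\supseteq\lambda(S')$ and a strictly longer camera route, so $R(\widehat{S})$ dominates $R(S')$. One small notational slip to clean up: in the lemma statement $S$ is the sequence \emph{for which} $CL_i$ is a feasible cluster, so $CL_i$ is not already its last element---consequently the seam you must control in $\widehat{S}$ is between the last cluster of $S$ and $CL_i$ (not between $CL_i$ and $CL'_2$), and it is exactly there that the hypothesis $CL_i.O\subseteq\lambda(S)$ does its work, matching the paper's claim that $\lambda(\tilde S_{j-1})=\lambda(S_{j-1})$.
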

\begin{proof}
Given a feasible sequence $S_n = [CL_1, CL_2, \ldots, CL_n]$ with $CL_1 = CL_i$, we prove $R(S_n)$ is not a maximal pattern.
For $1 < j \leq n$, cluster $CL_{j}$ is a feasible cluster for $S_{j-1} = [CL_1, \ldots, CL_{j-1}]$. Let $\tilde{S}_{j-1}$ denote the cluster sequence formed by concatenating $S$ before $S_{j-1}$. Then $CL_{j}$ is also a feasible cluster for $\tilde{S}_{j-1}$. This is because $CL_1.O = CL_i \subseteq \lambda(S)$ ensures that the core objects of $\tilde{S}_{j-1}$ are exactly the same as those of $S_{j-1}$.
Thus, let $\tilde{S}_n$ denote the sequence formed by concatenating $S$ before $S_n$, $\tilde{S}_n$ is also a feasible sequence with $\lambda(\tilde{S}_n) = \lambda(S_{n})$. This means that $\tilde{S}_n$ corresponds to a candidate pattern with the same objects as $S_{n}$ but a longer common path, indicating that $R(S_n)$ is not a maximal pattern.
\end{proof}

In Figure~\ref{fig:search-tree}, since the cluster $CL_2$ is a feasible cluster for $S = [CL_1]$ and $CL_2.O \subseteq$ \blue{$(CL_1.O = \lambda(S))$}, then any feasible sequence starting with $CL_2$ cannot correspond to a maximal pattern. As an example, the candidate pattern $\langle \{o_1, o_2, o_3\}, B \rightarrow D \rangle$ of the sequence $[CL_2, CL_3]$ is not maximal because there exists another maximal pattern $\langle \{o_1, o_2, o_3\}, A \rightarrow B \rightarrow D \rangle$.

Based on Lemma~\ref{lm:root-prune}, we state the following root pruning rule.
\begin{prule}\label{rule:root} \textbf{(The Root Pruning Rule)}
During the growth of a feasible sequence $S$, if a feasible cluster $CL_i$ satisfies $CL_i.O \subseteq \lambda(S)$, then the search subtree at the root starting with $CL_i$ can be pruned.
\end{prule}


It's worth noting that the effectiveness of the root pruning rule is influenced by the search order of clusters in MaxGrowth.
For example, in Figure~\ref{fig:search-tree}, if we search for cluster $CL_2$ before $CL_1$, the substree starting with $CL_2$ cannot be pruned by Rule~\ref{rule:root}. Because this subtree can only be identified as prunable by Rule~\ref{rule:root} during the growth of cluster sequences starting with $CL_1$. However, it has already been searched by that time.
To mitigate this issue, we define the binary relation "precedence" on all the clusters and utilize it to specify the search order of clusters in MaxGrowth.

\begin{definition}\label{def:precede} \textbf{Precedence Relation} \\
The precedence relation between two clusters $CL_u$ and $CL_v$ holds, denoted as $CL_u \leq_p CL_v$, if there is a series of clusters $CL_1, \ldots, CL_n$ such that:
\begin{itemize}
    \item $CL_1 = CL_u$ and $CL_n = CL_v$.
    \item For $1 \leq i < n$, $CL_i$ and $CL_{i+1}$ share at least one common object $o_j$ with $p_j^i \leq p_j^{i+1}$.
\end{itemize}
\end{definition}

\begin{example}
In the clusters shown in Figure~\ref{fig:cluster-example}, we have $CL_1 \leq_p CL_3$. This is because there exists a cluster series $CL_1, CL_2, CL_3$. Meanwhile, for $CL_1$ and $CL_2$, we can find a common object $o_1$ with $o_1 \in CL_1.O \land o_1 \in CL_2.O \land  p^1_1 < p^2_1$. For $CL_2$ and $CL_3$, we can also a common object $o_1$ with $o_1 \in CL_2.O \land o_1 \in CL_3.O \land p^2_1 < p^3_1$.
\end{example}

The precedence relation indicates the non-strict order in which objects move between clusters. In Definition~\ref{def:precede}, two adjacent clusters $CL_i$ and $CL_{i+1}$ share a common object $o_j$ with $p^i_j \leq p^{i+1}_j$ ensuring that at least one object will pass through camera $CL_i.c$ before camera $CL_{i+1}.c$. And the existence of cluster series from $CL_u$ to $CL_v$ allows this movement order to be transitive from $CL_u$ to $CL_v$.

Based on this relation, we determine the search order of clusters in MaxGrowth as follows: For clusters $CL_u$ and $CL_v$, if $CL_u \leq_p CL_v$ and $CL_v \not\leq_p CL_u$, then MaxGrowth will search $CL_u$ before $CL_v$.

Note that for two clusters $CL_u$ and $CL_v$ where both $CL_u \leq_p CL_v$ and $CL_v \leq_p CL_u$ are satisfied, we do not explicitly specify the search order between them. In practice, although symmetric pairs of precedence relations exist, their number is very small with the proportion less than ten percent. This is due to the inherent characteristics of clusters: Since objects within the same cluster exhibit temporal proximity, satisfying both $CL_u \leq_p CL_v$ and $CL_v \leq_p CL_u$ indicates that these two clusters simultaneously contain two sets of objects moving in opposite directions, which is rare and does not consistently occur across multiple clusters.

For clusters in Figure~\ref{fig:cluster-example}, since $CL_1 \leq_p CL_2$ and $CL_1 \not\leq_p CL_2$, MaxGrowth will search $CL_1$ before $CL_2$. In contrast, MaxGrowth will search $CL_3$ and $CL_4$ in arbitrary order. Thus, the overall search order is $CL_1 \to CL_2 \to \{ CL_3, CL_4 \} \to \{ CL_5, CL_6 \}$. With this order, Rule~\ref{rule:root} successfully identifies all target clusters including $CL_2$, $CL_3$, $CL_5$, and $CL_6$, and prunes their search subtrees at the root node.

Since symmetric pairs are rare in precedence relation, this allows us to determine the search order for most of clusters. The lemma below theoretically guarantees the effectiveness of this search order.

\begin{lemma}
If the precedence relation can specify the search order for all clusters, then the root pruning rule can prune the maximum number of search subtrees.
\end{lemma}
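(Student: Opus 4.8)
The plan is to show that when the precedence relation totally orders the clusters (up to the rare symmetric pairs, which we may ignore since they do not affect which subtrees are prunable), every cluster $CL_i$ that \emph{ought} to be pruned by Lemma~\ref{lm:root-prune} is in fact pruned by Rule~\ref{rule:root} under this search order. Concretely, call a root cluster $CL_i$ \textbf{redundant} if there exists a feasible sequence $S$ such that $CL_i$ is a feasible cluster for $S$ and $CL_i.O \subseteq \lambda(S)$; by Lemma~\ref{lm:root-prune} these are exactly the root clusters whose subtrees contain no maximal pattern, and hence the maximum set of root subtrees one could hope to prune. I would prove: under the precedence-based search order, Rule~\ref{rule:root} fires on every redundant cluster before its own root subtree is explored, and never fires on a non-redundant one. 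The second part is immediate from the statement of Rule~\ref{rule:root} (it only fires when $CL_i.O \subseteq \lambda(S)$ for the current feasible sequence $S$, which by definition makes $CL_i$ redundant), so the work is entirely in the first part.

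First I would establish the key monotonicity fact linking redundancy to the precedence order: if $CL_i$ is redundant, witnessed by a feasible sequence $S = [CL_1,\dots,CL_n]$ with $CL_n$ followed by $CL_i$ as a feasible cluster and $CL_i.O \subseteq \lambda(S)$, then for the first cluster $CL_1$ of the witnessing sequence we have $CL_1 \leq_p CL_i$ but $CL_i \not\leq_p CL_1$. The forward relation $CL_1 \leq_p CL_i$ follows because the core objects of $S$ move through the cameras $CL_1.c, \dots, CL_n.c$ in strictly increasing travel-path positions (Definition~\ref{def:core-object}) and then, since $CL_i$ is a feasible cluster for $S$, at least $m$ of them continue with $0 < p_c^i - p_c^n \le d+1$; this gives the required chain of shared objects with non-decreasing positions from $CL_1$ to $CL_i$. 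The strictness (that $CL_i \not\leq_p CL_1$) I would argue from the fact that all these shared objects strictly advance their positions along the chain, so no chain can return from $CL_i$ back to $CL_1$ without some object decreasing its position --- this is exactly the situation ruled out when the precedence relation is a genuine order rather than a symmetric pair, which is the hypothesis of the lemma. Hence the chosen search order puts $CL_1$ strictly before $CL_i$.

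Next I would run the actual pruning argument. Because $CL_1$ is searched before $CL_i$, at the root node the growth routine enters the subtree rooted at $CL_1$ first; there it constructs $S = [CL_1]$ and then (unless $S$ itself triggers an earlier prune) extends it to the witnessing feasible sequence, at which point the feasible cluster $CL_i$ satisfies $CL_i.O \subseteq \lambda(S)$ and Rule~\ref{rule:root} marks $CL_i$ as prunable before the search ever reaches the root child $CL_i$. One subtlety to handle carefully: if $CL_1$ is itself redundant, its subtree may be pruned before we get to extend $S$ to reach $CL_i$. I would resolve this by induction on the position of a cluster in the precedence order (or equivalently on the length of the longest precedence-chain ending at it): take $CL_1$ to be a $\leq_p$-minimal cluster among all valid "roots" of witnessing sequences for $CL_i$; minimality guarantees $CL_1$ is not redundant (any witness of its redundancy would yield an even smaller root, by the monotonicity fact applied again), so $CL_1$'s subtree is fully explored, the witnessing extension to $CL_i$ is actually enumerated, and the prune fires.

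The main obstacle I anticipate is precisely this last point --- making the "witnessing sequence is actually enumerated inside the surviving subtree" step airtight. It requires knowing that choosing a $\leq_p$-minimal root $CL_1$ both (a) still yields a valid witnessing feasible sequence for $CL_i$ (the concatenation/extension must remain feasible, which should follow from the same core-object-preservation argument used in the proof of Lemma~\ref{lm:root-prune}), and (b) guarantees $CL_1$ survives its own root-prune test. Part (b) is where the hypothesis "precedence specifies the search order for all clusters" does real work: without it, $CL_1$ could sit in a symmetric pair, be searched in the wrong relative order, get pruned spuriously, and the chain of reasoning collapses --- which is exactly why the lemma needs that hypothesis. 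Everything else (the closure properties of $\leq_p$, counting that the fired prunes cover all redundant root clusters) is routine bookkeeping on top of the definitions already in the paper.
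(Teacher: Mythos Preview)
Your approach is essentially the same as the paper's: both argue that if a cluster $CL_i$ (the paper calls it $CL_r$) satisfies the pruning condition via a witnessing feasible sequence $S=[CL_1,\ldots,CL_n]$, then the chain of core objects through $S$ and on to $CL_i$ yields $CL_1 \leq_p CL_i$, so under the hypothesis $CL_1$ is searched first and Rule~\ref{rule:root} fires on $CL_i$ during the exploration of $CL_1$'s subtree. The paper packages this as a short contradiction and stops there. You go further and address a subtlety the paper's proof omits entirely---what if $CL_1$ is itself redundant and its root subtree gets pruned before the witnessing $S$ is enumerated?---resolving it by passing to a $\leq_p$-minimal root and invoking the core-object preservation from the proof of Lemma~\ref{lm:root-prune} to keep the witness feasible. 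That inductive step is not in the paper, and arguably the paper's argument has a small gap without it; your version is the more careful of the two. One minor remark: your direct structural attempt at $CL_i \not\leq_p CL_1$ (``no chain can return without some object decreasing its position'') does not actually go through, since a return chain in Definition~\ref{def:precede} may use entirely different intermediate clusters and objects than those in $S$; but you correctly fall back on the lemma's hypothesis to obtain strictness, and that is all that is needed.
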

\begin{proof}
We proof by contradiction. Assuming that Rule~\ref{rule:root} cannot prune the maximum number of subtrees. There exists at least one cluster $CL_r$ that satisfies the pruning condition of Rule~\ref{rule:root} but is not recognized. Then $CL_r$ must be a feasible cluster for some candidate sequence $S = [CL_1,\ldots, CL_n]$ with $CL_r.O \subseteq \lambda_S$. Then objects in $CL_r.O$ pass through each camera in the cluster series $CL_1, \dots, CL_n, CL_r$ sequentially. Therefore, we have $CL_1 \leq_p CL_r$. Because the precedence relation specifies the order of all clusters, MaxGrowth will search $CL_1$ before $CL_r$ and identify the subtree starting with $CL_r$ as prunable by Rule~\ref{rule:root} during the growth of the candidate sequence $S$. This contradicts the assumption.
\end{proof}

\subsection{The Dependency Pruning Rule}
Although the root pruning rule can prune subtrees at the root of the search tree, it cannot prune subtrees at higher levels during candidate enumeration. In fact, most maximal patterns are generated from the search subtrees of non-root nodes. Specifically, in the basic candidate scheme of MaxGrowth, we need to append each feasible cluster to the end of the current cluster sequence for further growth. However, searching through all feasible clusters for a cluster sequence as described above is unnecessary and time-consuming, as many subsequent search subtrees of feasible clusters will not produce any maximal patterns.

\blue{To facilitate more refined optimization of the search space for non-maximal patterns, we introduce the dependency pruning rule.} The main idea of this rule is to determine non-essential search feasible clusters by analyzing whether a specific relationship between feasible clusters is satisfied. We call this specific relationship as \textbf{the dependency relationship} and describe it in detail below.

For a cluster sequence $S$ and its feasible cluster $CL_i$, let the sequence $S_i = S \;||\; [CL_i]$ represents the concatenation of two sequences $S$ and $[CL_i]$. Then, the dependency relationship between two feasible clusters is defined as follow:

A feasible cluster $CL_i$ depends on $CL_j$ for a same feasible sequence $S$, if object $o_c \in \lambda(S_i) \rightarrow (o_c \in \lambda(S_j) \land p_c^j < p_c^i)$.

Intuitively, if a feasible cluster $CL_i$ depends on another feasible cluster $CL_j$, it actually means that any object in $\lambda(S_i)$ must pass through camera $CL_j.c$ before reaching camera $CL_i.c$. In other words, $S_i$ not only lacks information about the common route of objects passing through camera $CL_j.c$, but its core objects can also be obtained through the further growth of $S_j$.

\begin{example}
Given $m = 2$, $d = 1$, and $\epsilon = 6$ in Figure~\ref{fig:cluster-example}, there are two feasible clusters $CL_2$ and $CL_3$ for the feasible sequence $S = [CL_1]$.
For these two clusters, we can also derive that $S_2 = [CL_1, CL_2]$, $S_3 = [CL_1, CL_3]$ and $\lambda(S_2) = \lambda(S_3) = \{o_1, o_2\}$. Since object $o_1$ satisfies $o_1 \in \lambda(S_2)$ as well as $o_1 \in \lambda(S_2) \land p_1^2 < p_1^3$; and object $o_2$ satisfies $o2 \in \lambda(S_3)$ as well as $o_2 \in \lambda(S_3) \land p_2^2 < p_2^3$ , we can conclude that $CL_3$ depends on $CL_2$.
\end{example}

The following lemma \blue{demonstrates} how to prune non-essential subsequent search subtrees for feasible clusters based on the aforementioned dependency relationship.

\begin{lemma}\label{lm:depend-prune}
If cluster $CL_i$ depends on $CL_j$ for a feasible sequence $S$, then the subsequent sequence growth after appending $CL_i$ to $S$ will not produce any maximal patterns.
\end{lemma}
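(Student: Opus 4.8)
The plan is to mirror the argument behind Lemma~\ref{lm:root-prune}: starting from an arbitrary feasible sequence $T$ that the Growth routine could build after appending $CL_i$ to $S$, I would construct a strictly longer feasible sequence $T'$ whose candidate pattern dominates $R(T)$ in the sense of Definition~\ref{def:max-pattern}. Write $S = [CL_1, \dots, CL_n]$ and $T = S \;||\; [CL_i] \;||\; Q$, where $Q$ is the (possibly empty) block of clusters appended after $CL_i$, and set $T' = S \;||\; [CL_j] \;||\; [CL_i] \;||\; Q$; that is, splice $CL_j$ in immediately before $CL_i$. Everything then rests on three claims: (i) $\lambda(T) \subseteq \lambda(T')$; (ii) $T'$ is a feasible sequence; and (iii) the common route $R(T).P$ is a $d$-subpath of the strictly longer route $R(T').P$. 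Granting them, Lemma~\ref{lm:feasible-cls} makes $R(T')$ a legitimate candidate pattern with $R(T').O = \lambda(T') \supseteq \lambda(T) = R(T).O$, and (iii) supplies the route containment, so $R(T)$ violates Definition~\ref{def:max-pattern} and is not maximal; since every pattern emitted from the subtree below $S \;||\; [CL_i]$ is such an $R(T)$ with $|T| \ge k$, the whole subtree yields no maximal pattern.

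For (i) I would fix a core object $o_c \in \lambda(T)$. Because core-object membership is inherited by prefixes, $o_c \in \lambda(S \;||\; [CL_i])$, so the dependency hypothesis gives $o_c \in \lambda(S \;||\; [CL_j])$ and $p_c^j < p_c^i$. I then check the two conditions of Definition~\ref{def:core-object} for $o_c$ along $T'$: membership in every cluster of $T'$ is immediate; all consecutive-cluster gaps inside $S$, the gap into $CL_j$, and the gap out of $CL_i$ together with the whole of $Q$ are copied verbatim from $o_c \in \lambda(S \;||\; [CL_j])$ or $o_c \in \lambda(T)$; and the single genuinely new gap, $(CL_j, CL_i)$, follows by combining $p_c^j < p_c^i$ with $p_c^j \ge p_c^n + 1$ (the last gap of $S \;||\; [CL_j]$) and $p_c^i \le p_c^n + d + 1$ (the last gap of $S \;||\; [CL_i]$) to obtain $0 < p_c^i - p_c^j \le d$. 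Adding those two sub-gaps also yields $p_c^i - p_c^n \ge 2$, hence $d \ge 1$ whenever the hypothesis is non-vacuous, which is exactly the inequality (iii) needs: inserting one camera into a route produces a single gap of length $2 \le d + 1$ and leaves every other gap equal to $1$, so the obvious injection witnesses that $R(T).P$ is a $d$-subpath of $R(T').P$. For (ii) I reuse the same splicing on prefixes: feasibility of $T$ gives at least $m$ objects witnessing that the last cluster of $T$ is feasible for its prefix, the core-object containment of (i) (applied to those prefixes) keeps them in the core-object set of the corresponding prefix of $T'$, and the penultimate cluster of $T'$ coincides with that of $T$, so the last-gap condition is untouched; the degenerate case $Q = \emptyset$ is handled directly, using that $CL_j$ is a feasible cluster for $S$ and that the at least $m$ objects of $\lambda(S \;||\; [CL_i])$ witness $CL_i$ being feasible for $S \;||\; [CL_j]$.

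The step I expect to be the main obstacle is (i), and the crucial discipline there is to \emph{not} try to prove the stronger equality $\lambda(T) = \lambda(T')$: the reverse inclusion really does fail, since a core object of $T'$ only guarantees $p_c^i - p_c^n \le 2(d+1)$ across the direct edge that $T$ uses, which may exceed $d + 1$. This asymmetry should be stated explicitly, because Definition~\ref{def:max-pattern} only asks for $R(T).O \subseteq R(T').O$ and one-sided containment is enough. Two lesser points also need care: first, $CL_j$ must not already occur in $T$, which holds because any core object of the feasible sequence $T$ (such objects exist by Lemma~\ref{lm:feasible-cls}) passes $CL_i$ strictly before every cluster of $Q$ but passes $CL_j$ strictly before $CL_i$ by the dependency hypothesis, so $CL_j$ is new; and second, since common routes carry no intrinsic time intervals, the phrase ``$d$-subpath'' in Definition~\ref{def:max-pattern} should be read purely as the existence of a gap-bounded, camera-matching injection, which is precisely what (iii) delivers.
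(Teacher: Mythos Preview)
Your approach is essentially the same as the paper's: splice $CL_j$ immediately before $CL_i$ to obtain a longer feasible sequence $S \,||\, [CL_j, CL_i]$ whose candidate pattern dominates, and observe that this domination persists for every further extension $Q$ past $CL_i$. You are in fact more careful than the paper, which asserts that $\lambda(S_i)$ and $\lambda(S_{ji})$ are \emph{equal} (and hence that the two search subtrees are ``identical'') while only proving the inclusion $\lambda(S_i) \subseteq \lambda(S_{ji})$; your explicit remark that the reverse inclusion can fail but that one-sided containment already suffices for Definition~\ref{def:max-pattern} is the correct way to close the argument.
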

\begin{proof}
Assuming $S = [CL_1, CL_2, ..., CL_n]$, let $S_i$ denote the cluster sequence $S \;||\; [CL_i]$ and $S_{ji}$ denote $S \;||\; [CL_j, CL_i]$, where $||$ still represents the concatenation of two cluster sequences.
We prove that the search tree rooted at $S_{i}$ and $S_{ji}$ are identical. Since the last cluster of $S_i$ and $S_{ji}$ are both $CL_i$, this is equivalent to proving their core objects are the same.
Based on the dependency relationship, each $o_c \in \lambda(S_i)$ satisfies the following conditions: (1) $p_c^i - p_c^n \leq d + 1$, (2) $p_c^i - p_c^j > 0$, (3) $p_c^j - p_c^n > 0$. From (3) and (1) - (2), we can obtain $0 < p_c^j - p_c^n < d + 1$.
Thus, $\lambda(S_i) \subseteq \lambda(S_j)$ and $\lambda(S_i) \subseteq (CL_i.O \cap \lambda(S_j))$. Meanwhile, from (2) and (1) - (3), we have $0 < p_c^i - p_c^j < d + 1$. Therefore, $S_{ji}$ is a feasible sequence with core objects that contain at least $\lambda(S_i)$. As a result, $S_i$ and $S_{ji}$ have the same core objects.
\end{proof}

Based on Lemma~\ref{lm:depend-prune}, we state the dependency pruning rule.

\begin{prule}\label{rule:depend} \textbf{(The Dependency Pruning Rule)}
During the growth of a feasible sequence $S = [CL_1, CL_2, ..., CL_n]$, if a feasible cluster $CL_i$ depends on at least one other feasible cluster, then the search substree for $CL_i$ after $S$ can be pruned.
\end{prule}

As shown in Figure~\ref{fig:search-tree}, we know from the previous example that for candidate sequence $S = [CL_1]$, the feasible cluster $CL_3$ depends on $CL_2$. Therefore, we do not need to append $CL_3$ to $S$ for further sequence growth.

The dependency pruning rule prunes on non-root search nodes during sequence growth (candidate enumeration) by detecting dependency relationships between feasible clusters. Specifically, if a feasible cluster $CL_i$ can be pruned, it means that searching $CL_i$ directly will lead to the same subsequent search subtree as first searching the clusters that $CL_i$ depends on and then searching $CL_i$. Therefore, there is no need to continue searching $CL_i$ after $S$.

\section{Experimental Evaluation} \label{sec:experiment}
In this section, we evaluate the effectiveness of the relaxed co-movement pattern and the scalability of MaxGrowth algorithm. 

\subsection{Experimental Setup}\label{subsec:exp-setup}

\textbf{Datasets.} Following previous work~\cite{Zhang2023pvldb}, we use real GPS trajectories and road network, including DIDI Chengdu~\cite{tong2018didi} and Singapore Taxi~\cite{fan2016spare}, to generate approximate trajectories recovered from videos. The idea is to deploy a specified number of cameras  on the road network. Then,  the entrance time and exit time from a camera according to the travel speed estimation and the attributes of the camera can be roughly estimated from the GPS trajectories. 
\begin{itemize}
    \item \textbf{Singapore}: This is a large-scale cross-camera trajectory dataset which contains travel routes of 2,756 taxis in the road network of Singapore over one month.
    \item \textbf{Chengdu}: This dataset contains cross-camera trajectories of 12,000 ride-hailing vehicles in Chengdu, China.
\end{itemize}
Besides these two semi-synthetic datasets from GPS trajectories, we also construct a real dataset from video data:
\begin{itemize}
    \item \textbf{CityFlow}~\cite{tang2019cityflow}: This video dataset contains $215$ minutes of videos collected from $46$ cameras spanning 16 intersections in a mid-sized U.S. city. We adopt the multi-camera multi-target tracking algorithm~\cite{yang2022box} to recover movement paths of objects in the CityFlow videos. The recovered trajectory dataset includes cross-camera trajectories of $337$ unique objects over a duration of approximately 5 minutes.
\end{itemize}

Since CityFlow is a small-scale dataset, we employ Singapore and Chengdu for scalability analysis. CityFlow serves to measure the effectiveness of relaxed co-movement pattern mining.

\blue{
\begin{table}[ht!]
    \color{blue}
    \captionsetup{labelfont={color=blue, bf},textfont={color=blue, bf}, font=normalsize}
    \small
    \centering
    \vspace{-4mm}
    \caption{Statistics of datasets.}
    \vspace{-2mm}
    \label{tbl:dataset}
    \begin{tabular}{|l|c|c|c|}
        \hline
        \textbf{Attributes} & \textbf{Singapore} & \textbf{Chengdu} & \textbf{CityFlow} \\
        \hline
        \# objects & $2\text{,}756$ & $12\text{,}000$ & $337$ \\
        \hline
        \# cameras & $37\text{,}370$ & $9\text{,}476$ & $19$ \\
        \hline
        \# data points & $2\text{,}458\text{,}742$ & $1\text{,}841\text{,}071$ & $2\text{,}563$ \\
        \hline
        Time span per vehicle & $6\text{,}246$s & $6\text{,}801$s & $59$s \\
        \hline
        \makecell[l]{Avg. trajectory length \\[-0.26em] (\# cameras per vehicle)} & $892$ & $153$ & $8$ \\
        \hline
    \end{tabular}
    \vspace{-1mm}
\end{table}
}
 
\noindent\textbf{Comparison Setup.} 
In the scalability analysis, we compare MaxGrowth with the baseline algorithm FRB, which is an extension of previous video-based co-movement pattern miner and presented in Section~\ref{sec:baseline}.
\blue{
For the effectiveness analysis, we compare our proposed relaxed co-movement pattern (referred to as VPlatoon) with previous co-movement pattern from \cite{Zhang2023pvldb} (referred to as VConvoy) in terms of their effectiveness for pattern discovery.
}

\noindent\textbf{Parameter Setup.} We examine the scalability performance w.r.t. the parameters in Table~\ref{tbl:params}, including four query-related parameters ($m$, $k$, $d$ and $\epsilon$) and \blue{three} database-related parameters (\blue{camera number}, object number and averag path length). \blue{The default parameters are highlighted in bold.}

All the experiments are conducted on a server with $3.20$GHz i9-12900K CPU, $256$GB of memory and $2$TB hard drive. 

\begin{table}[t!]
\centering
\caption{Evaluation parameter settings.}
\label{tbl:params}
\resizebox{\linewidth}{!}{
\begin{tabular}{|c|c|l|} \hline
\multirow{2}{*}{Group size $m$} & Singapore &  \multirow{2}{*}{\blue{\textbf{3}, 4, 5, 6, 7, 8}}\\ \cline{2-2}
& Chengdu &  \\ \hline

\multirow{2}{*}{Common cameras $k$} & Singapore &  \multirow{2}{*}{\blue{2, \textbf{3}, 4, 5, 6, 7, 8}}\\ \cline{2-2}
 & Chengdu & \\ \hline

\multirow{2}{*}{Gap tolerance $d$} & Singapore &  \multirow{2}{*}{0, 1, 2, \textbf{3}, 4}\\ \cline{2-2}
& Chengdu & \\ \hline

\multirow{2}{*}{Proximity $\mathbf{\epsilon}$} & Singapore & \multirow{2}{*}{\blue{$40$, $50$, $\mathbf{60}$, $70$, $80$, $90$, $100$}}  \\ \cline{2-2}
& Chengdu & \\ \hline

\multirow{2}{*}{\blue{Camera number}} & \blue{Singapore} & \blue{$5$k, $10$k, $15$k, $20$k, $25$k, $30$k, \textbf{35k}} \\ \cline{2-3}
& \blue{Chengdu} & \blue{$3$k, $4$k, $5$k, $6$k, $7$k, $8$k, \textbf{9k}} \\ \hline

\multirow{2}{*}{Object number} & Singapore & \blue{$1.5$k, $1.7$k, $1.9$k, $2.1$k, $2.3$k, $2.5$k, \textbf{2.7k}} \\ \cline{2-3}
& Chengdu & \blue{$0.5$k, \textbf{2.5k}, $4.5$k, $6.5$k, $8.5$k, $10.5$k, $12$k} \\ \hline

\multirow{2}{*}{Average path length} & Singapore & \blue{$200$, $300$, $400$, $500$, $600$, $700$, $\mathbf{800}$}  \\ \cline{2-3}
& Chengdu & \blue{$90$, $100$, $110$, $120$, $130$, $140$, $\mathbf{150}$} \\ \hline
\end{tabular}
}
\end{table}

\subsection{Scalability Analysis}

\textbf{Varying $\mathbf{m}$}. We first analyze the performance of the relaxed co-movement pattern mining algorithms w.r.t. group size $m$. As shown in Figure~\ref{exp:vary-m-time}, when $m$ is small, MaxGrowth demonstrates remarkably faster performance than the baseline algorithm FRB. Specifically, MaxGrowth’s runtime is reduced by two orders of magnitude when $m=3$ in Chengdu dataset. \blue{Furthermore, when $m = 2$, FRB fails to complete within $3\text{,}600$ seconds on both datasets, while MaxGrowth finishes mining in $18.41$ seconds on Singapore dataset and $2.6$ seconds on Chengdu dataset. We omit this result to avoid distortion.} The high efficiency of MaxGrowth stems from its ability to avoid the time-consuming verification of false positives and effectively discard non-maximal candidate patterns through pruning rules.
As $m$ increases, the performance gap narrows, with FRB showing similar efficiency to MaxGrowth, due to the significant reduction in the number of valid candidate patterns.

\begin{figure}[h!]
    \vspace{-3mm}
  \centering
  \subfigcapskip=-6pt
  \subfigbottomskip=-3pt
  \subfigure[Singapore]{
    \includegraphics[width=0.235\textwidth]{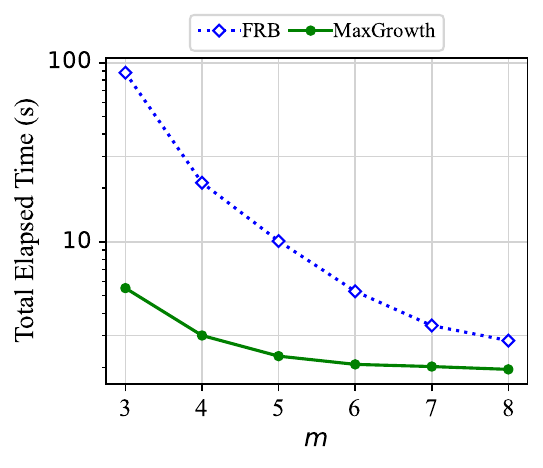}
  }
  \hspace{-4mm}
  \subfigure[Chengdu]{
    \includegraphics[width=0.235\textwidth]{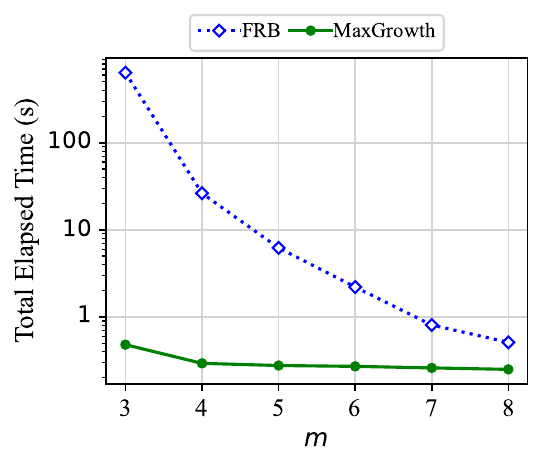}
  }
  \vspace{-3mm}
  \caption{Varying $m$.}
  \label{exp:vary-m-time}
 \vspace{-3mm}
\end{figure}

\noindent\textbf{Varying $\mathbf{\epsilon}$}. We examine the performance with increasing $\epsilon$ in Figure~\ref{exp:vary-eps-time}. MaxGrowth consistently outperforms FRB mainly because FRB requires heavy computation overhead to verify the constraint of temporal proximity. In contrast, the candidate enumeration scheme in MaxGrowth eliminates the verification cost.
When $\epsilon$ is large, the search space grows dramatically. The widened performance gap between MaxGrowth and its competitor again validates the effectiveness of its enumeration and pruning schemes.


\begin{figure}[h!]
  \vspace{-3mm}
  \centering
  \subfigcapskip=-6pt
  \subfigbottomskip=-3pt
  \subfigure[Singapore]{
    \includegraphics[width=0.235\textwidth]{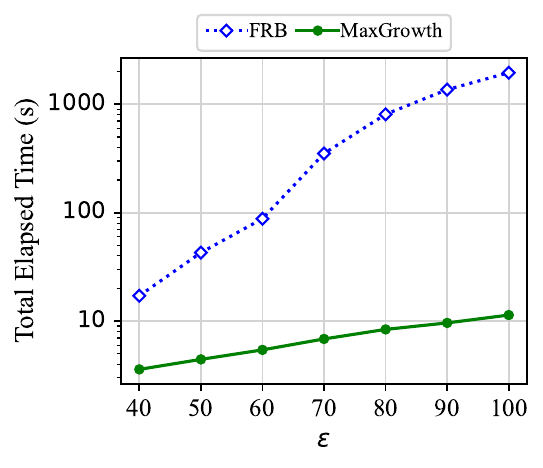}
  }
  \hspace{-4mm}
  \subfigure[Chengdu]{
    \includegraphics[width=0.235\textwidth]{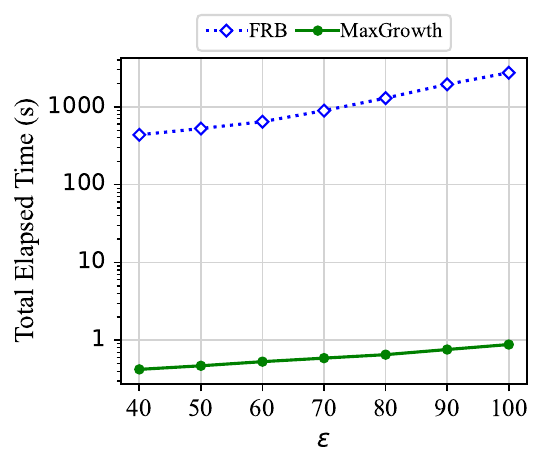}
  }
  \vspace{-3mm}
  \caption{Varying  $\mathbf{\epsilon}$.}
  \label{exp:vary-eps-time}
 \vspace{-3mm}
\end{figure}

\noindent\textbf{Varying $\mathbf{k}$}. 
As depicted in Figure~\ref{exp:vary-k-time}, the MaxGrowth algorithm is not sensitive to the variation of $k$. This is because its candidate patterns are progressively enumerated with  increasing route length. In other words, regardless of $k$,  all candidate patterns that meet the group size and temporal proximity constraints are enumerated before undergoing the maximal pattern mining. The slight decline observed in the MaxGrowth algorithm is attributed to the reduction in the number of valid candidate patterns as $k$ increases, leading to a lower computational cost for identifying maximal patterns. In contrast, FRB employs sequence mining to filter candidates with route lengths shorter than $k$, allowing more candidates to be excluded as $k$ grows, thus reducing the overall mining overhead.


\begin{figure}[h!]
\vspace{-3mm}
  \centering
  \subfigcapskip=-6pt
  \subfigbottomskip=-3pt
  \subfigure[Singapore]{
    \includegraphics[width=0.235\textwidth]{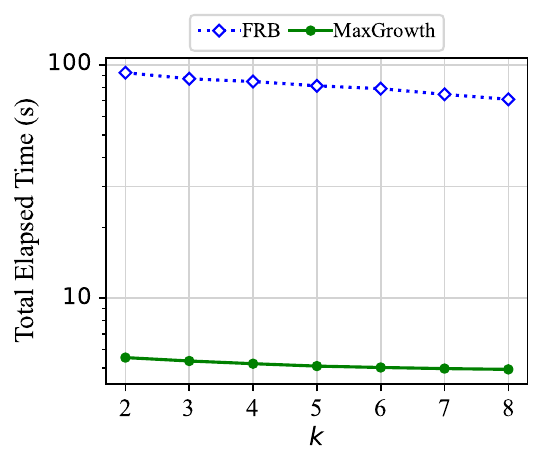}
  }
  \hspace{-4mm}
  \subfigure[Chengdu]{
    \includegraphics[width=0.235\textwidth]{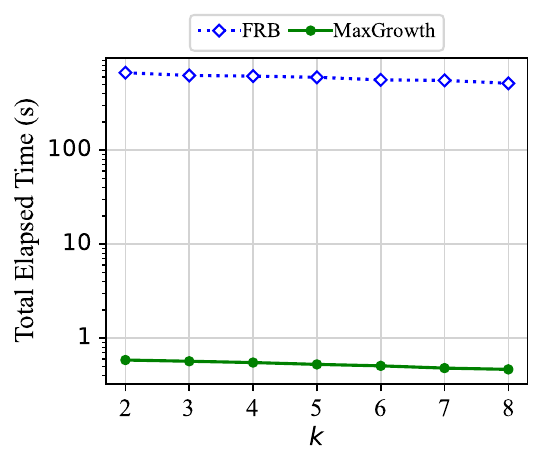}
  }
  \vspace{-3mm}
  \caption{Varying $\mathbf{k}$.}
  \label{exp:vary-k-time}
 \vspace{-3mm}
\end{figure}

\noindent\textbf{Varying $\mathbf{d}$}. Recall that we introduce an additional parameter $d$ in this work to allow missing cameras for relaxed co-movement pattern mining. When $d=0$, the problem is reduced to traditional co-movement pattern mining. FRB demonstrates comparable performance with MaxGrowth because FRB is customized for the setting with consecutive camera sequences. When the condition is relaxed, its running time increases remarkably, paying large amount of computation overhead for candidate verification. MaxGrowth achieves the best performance and its running time remains stable with increasing $d$.  When $d=4$, the time cost of MaxGrowth is two orders of magnitude lower than FRB.

\begin{figure}[t!]
\vspace{-3mm}
  \centering
  \subfigcapskip=-6pt
  \subfigbottomskip=-3pt
  \subfigure[Singapore]{
    \includegraphics[width=0.235\textwidth]{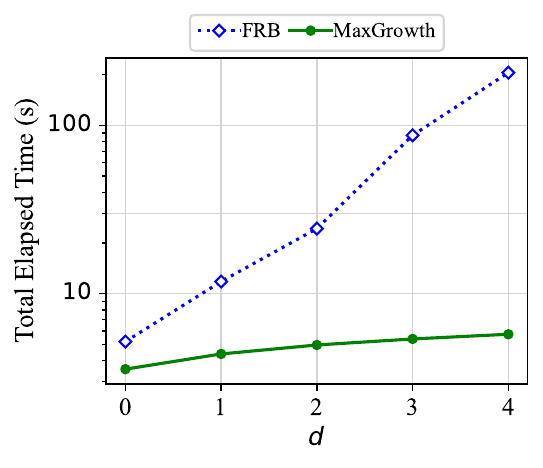}
  }
  \hspace{-4mm}
  \subfigure[Chengdu]{
    \includegraphics[width=0.235\textwidth]{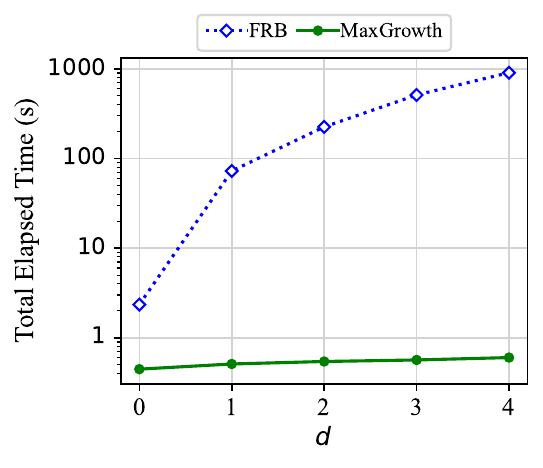}
  }
  \vspace{-3mm}
  \caption{Varying $\mathbf{d}$.}
  \label{exp:vary-d-time}
 \vspace{-3mm}
\end{figure}

\noindent\textbf{Varying Object Number}. 
\blue{We continue to analyze the performance with respect to database-related parameters. The results for increasing dataset cardinality are presented in Figure~\ref{exp:vary-object-time}.}
The outcomes share similar patterns with previous experiments --- the running time increases with higher number of objects and the performance gap is widened with larger-scale dataset. In Chengdu, the running time of FRB is two orders of magnitude higher than MaxGrowth.

\begin{figure}[h!]
\vspace{-3mm}
  \centering
  \subfigcapskip=-6pt
  \subfigbottomskip=-3pt
  \subfigure[Singapore]{
    \includegraphics[width=0.235\textwidth]{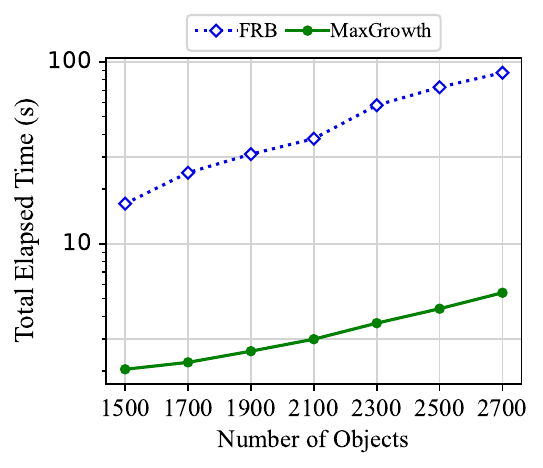}
  }
  \hspace{-4mm}
  \subfigure[Chengdu]{
    \includegraphics[width=0.235\textwidth]{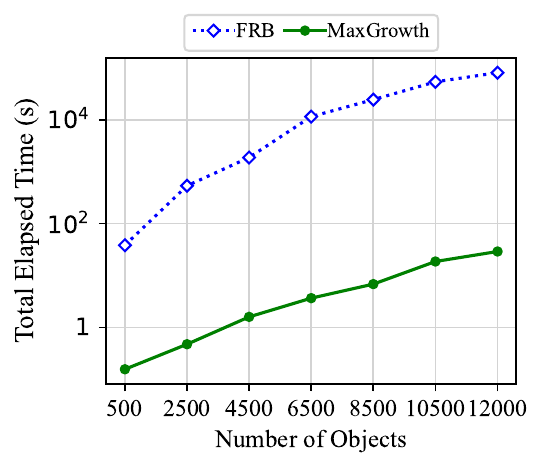}
  }
  \vspace{-3mm}
  \caption{Varying object number.}
  \label{exp:vary-object-time}
 \vspace{-3mm}
\end{figure}

\noindent\textbf{Varying Path Length}.
\blue{In Figure~\ref{exp:vary-path-time}, we examine the performance with increasing travel path length for the moving vehicles.}
We construct subsets of Singapore and Chengdu with increasing path length. MaxGrowth is also scalable to this parameter and outperforms FRB with a large margin.

\begin{figure}[h!]
\vspace{-3mm}
  \centering
  \subfigcapskip=-6pt
  \subfigbottomskip=-3pt
  \subfigure[Singapore]{
    \includegraphics[width=0.235\textwidth]{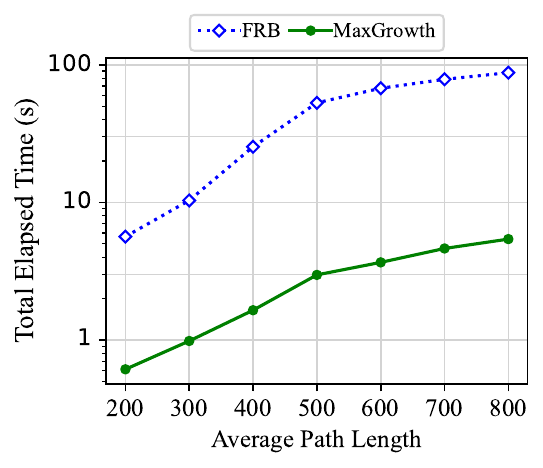}
  }
  \hspace{-4mm}
  \subfigure[Chengdu]{
    \includegraphics[width=0.235\textwidth]{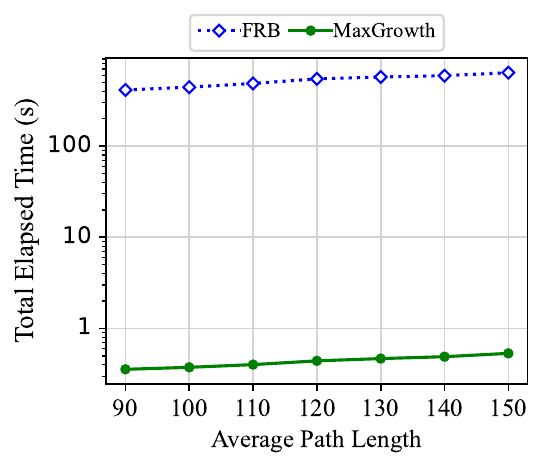}
  }
  \vspace{-3mm}
  \caption{Varying path length.}
  \label{exp:vary-path-time}
 \vspace{-3mm}
\end{figure}

\noindent \blue{\textbf{Varying Camera Number}}. 
\blue{In the final scalability experiment, we report the mining performance w.r.t. the number of cameras. As shown in Figure~\ref{exp:vary-camera-time}, MaxGrowth again outperforms FRB across all camera settings, further demonstrating its great efficiency.
Another noteworthy observation is that both FRB and MaxGrowth exhibit low sensitivity to the number of cameras compared to other database-related parameters. This is because deploying more cameras in the same road network does not result in a significant increase in the number of co-movement pattern candidates when the object set and the average trajectory length remain unchanged.
}

\begin{figure}[h!]
  \captionsetup{labelfont={color=blue, bf},textfont={color=blue, bf}, font=normalsize}
  \centering
  \subfigure[Singapore]{
    \includegraphics[width=0.235\textwidth]{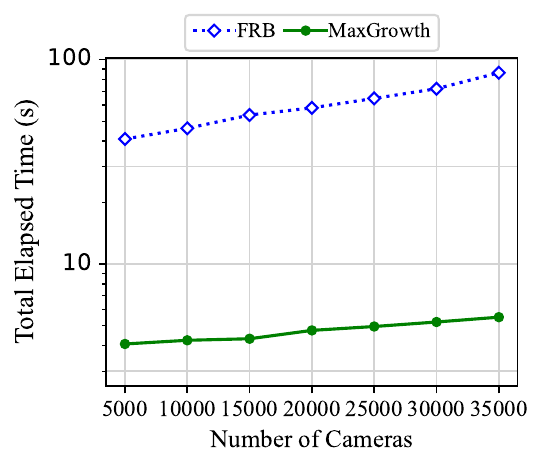}
  }
  \hspace{-4mm}
  \subfigure[Chengdu]{
    \includegraphics[width=0.235\textwidth]{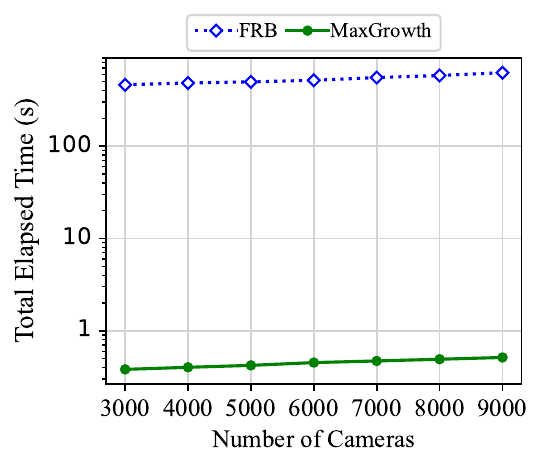}
  }
  \caption{\blue{Varying camera number.}}
  \label{exp:vary-camera-time}
\end{figure}

\subsection{In-Depth Analysis of MaxGrowth}
\textbf{Running Time Breakdown Analysis.} To further validate the effectiveness of the proposed techniques in MaxGrowth, we first perform a breakdown analysis on the running time of each component, including candidate generation, validness verification and dominance verification. Table~\ref{tab:breakdown} reports the time cost of each component in FRB and MaxGrowth. The cost of validness verification is the main performance bottleneck for FRB in both Singapore and Chengdu datasets. MaxGrowth can eliminate the requirement for candidate verification and thus save considerable amount of verification cost. Meanwhile, the candidate generation and dominance verification costs of FRB are also higher than those of MaxGrowth. This is because FRB cannot effectively avoid the search and generation of dominated non-maximal candidate patterns, leading to a larger search space during candidate generation and increased cost for dominance verification. In contrast, MaxGrowth significantly reduces these overheads through two pruning rules for maximal pattern mining.


\begin{table}[h!]
\centering
\vspace{-1mm}
\caption{Running time breakdown analysis.}
\vspace{-2mm}
\label{tab:breakdown}
\resizebox{0.9\linewidth}{!}{
\begin{tabular}{|c|c|c|c|c|}
\hline
\multirow{3}{*}{\textbf{Datasets}} & \multirow{3}{*}{\textbf{Methods}} & \multicolumn{3}{c|}{\textbf{Runtime of the breakdown stage (s)}} \\ \cline{3-5} 
 &  & \makecell{Candidate \\[-0.3em] Generation} & \makecell{Validness \\[-0.3em] Verification} & \makecell{Dominance \\[-0.3em] Verification} \\ \hline
\multirow{2}{*}{Singapore} & FRB & 20.2 & 43.8 & 12.1 \\ \cline{2-5} 
 & MaxGrowth & 8.5 & - & 3.0 \\ \hline
\multirow{2}{*}{Chengdu} & FRB & 92.3 & 444.5 & 96.6 \\ \cline{2-5} 
 & MaxGrowth & 5.3 & - & 2.3 \\ \hline
\end{tabular}
}
\vspace{-1mm}
\end{table}

\noindent\textbf{Ablation Study.} We conduct ablation experiments to further investigate the impact of pruning rules on MaxGrowth. We design three additional variants of MaxGrowth. Among these variants, MG-Root employs only the root pruning rule, MG-Dep uses only the dependency pruning rule, and MG-NoPrune does not utilize either pruning rule. As shown in Table~\ref{tbl:ablation}, both the root pruning rule and the dependency pruning rule effectively prune redundant search subtrees, resulting in the clear performance improvement. Moreover, because the two rules prune at different levels of the search tree, MaxGrowth can further enhance efficiency. In the Chengdu dataset that contains numerous redundant patterns, while the root pruning rule achieves a 50\% performance improvement through greatly pruning the search tree at the root node, many non-maximal patterns still emerge in subsequent search subtrees, which limits overall performance. In this scenario, the dependency pruning rule provides a decisive enhancement by enabling fine-grained pruning during pattern growth.


\begin{table}[h!]
\centering
\vspace{-1mm}
\caption{Effect of candidate pruning.}
\vspace{-2mm}
\label{tbl:ablation}
\resizebox{0.9\linewidth}{!}{
\begin{tabular}{|c|c|c|c|c|}
\hline
\multirow{2}{*}{\textbf{Datasets}} & \multicolumn{4}{c|}{\textbf{The number of non-maximal patterns}} \\ \cline{2-5} 
                          & MG-NoPrune & MG-Root & MG-Dep & MaxGrowth \\ \hline
Singapore                 & 2,097,851  & 1,402,269 & 59,329 & 31,067    \\ \hline
Chengdu                   & 15,961,467 & 8,568,710 & 5,424 & 2,075    \\ \hline
\end{tabular}
}
\vspace{-2mm}
\end{table}

\subsection{Effectiveness Analysis}
In this part, we perform quality analysis on the relaxed co-movement patterns discovered according to our problem definition. The experiment is conducted on the real video dataset CityFlow, which was designed for the task of multi-camera multi-object tracking. We use its annotated trajectories as groundtruth and derive golden co-movement patterns. $F_1$-score is used as the performance metric. \blue{A discovered co-movement pattern is considered a positive match with some golden pattern by default if the intersection over union ($IoU$) of their objects and time span are both greater than $80\%$.}

\begin{table}[h!]
\small
\centering
\vspace{-1mm}
\caption{Comparison of pattern discovery effectiveness.}
\vspace{-2mm}
\label{tbl:quality}
\begin{tabular}{|c|c|c|c|}
\hline
\textbf{Pattern Type} & $\mathbf{F_1}$\textbf{-score} & \textbf{Precision} & \textbf{Recall} \\
\hline
VConvoy & 0.719 & 0.861 & 0.617 \\
\hline
\blue{VConvoy-TMerge} & \blue{0.742} & \blue{0.874} & \blue{0.644} \\
\hline
VPlatoon & 0.840 & 0.826 & 0.855 \\
\hline
\end{tabular}
\vspace{-1mm}
\end{table}

\blue{To evaluate the effectiveness of VPlatoon and VConvoy, we perform these two algorithms on the trajectories recovered from CityFlow. Additionally, we construct a competitive  baseline that applies the idea of TMerge~\cite{chao2023tmerge} as a post-processing step to improve the accuracy of the recovered trajectories. Certain trajectory segments split by the missing observations can be merged according to spatial and temporal clues. The new baseline performs VConvoy against the refined dataset and we call it VConvoy-TMerge.}
As to the pattern parameters, we set the minimum number of objects $m$ to $2$, the minimum route length $k$ to $3$, the proximity threshold $\epsilon$ to $60$ seconds, and the distance parameter $d$ for VPlatoon to $3$. Since the input dataset is small-scale, \blue{it} takes 0.03 seconds to run the TCS-tree algorithm for VConvoy pattern mining and 0.05 seconds to run MaxGrowth for VPlatoon pattern mining. \blue{However, VConvoy-TMerge incurs a considerable cost of 5.4 hours to obtain the refined dataset with parameters $\tau = 800\text{,}000$ and $K = 0.001$.}

The $F_1$-score of discovered patterns are reported in Table~\ref{tbl:quality}. VPlatoon achieves significantly higher $F_1$-score, because VConvoy has missed many valid patterns and its recall is remarkably lower than VPlatoon. In terms of precision, though the pattern is relaxed, the precision of VPlatoon is still comparable to VConvoy, implying that the number of false positive patterns introduced by pattern relaxation is limited.
\blue{Moreover, there is a slight performance improvement for VConvoy-TMerge, yet its $F_1$-score still lags behind VPlatoon by nearly $10\%$. This implies that current trajectory accuracy improvement techniques alone may be insufficient for VConvoy to discover most of the missing patterns. In contrast, our proposed VPlatoon demonstrates greater effectiveness at a lower cost.}

\blue{In our default settings, the positive matching thresholds for performance evaluation are set at $80\%$. To justify this, we analyze the quality distribution characteristics of the discovered co-movement patterns. Specifically, for each discovered pattern, we compute its maximum Intersection over Union (IoU) with the ground truth patterns, where both object set IoU and time span IoU are considered in the same manner as our positive matching rule. The distribution of these maximum IoU across all discovered patterns are illustation in Figure~\ref{exp:f1-dst}. Both VPlatoon and VConvoy exhibit consistent IoU distributions, with the largest proportion of discovered patterns having an IoU around $0.8$. This suggests that when the IoU positive matching thresholds are set above $80\%$, most high-quality patterns cannot be correctly matched, whereas thresholds lower than $80\%$ increase the risk of mismatching low-quality and unrepresentative patterns as positives. Therefore, setting the default matching thresholds to $80\%$ would be a balanced and reasonable choice.}

\begin{figure}[h!]
    \vspace{-3mm}
    \setlength{\abovecaptionskip}{6pt}
    \subfigcapskip=-9pt
    \subfigbottomskip=-3pt
    
    \centering
    \includegraphics[width=0.27\textwidth]{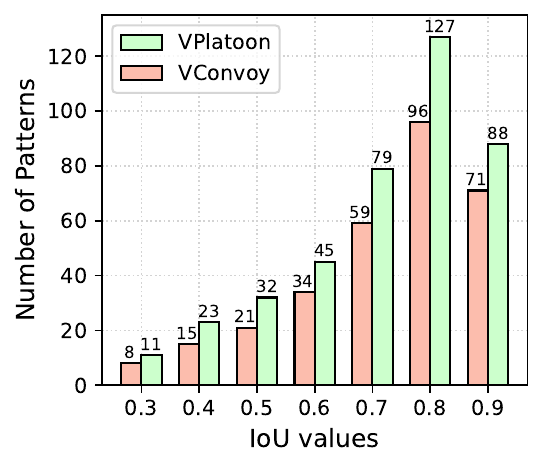}
    \vspace{-2mm}
    \caption{Distribution of IoU values for discovered patterns.}
    \label{exp:f1-dst}
    
    \vspace{-2mm}
\end{figure}

To better understand the relationship between our introduced parameter $d$ and the patterns discovered by VPlatoon, we present in Figure~\ref{exp:f1-d} the comparison of VPlatoon-mined patterns with the ground truth across different values of $d$. As $d$ increases, the precision gradually decreases while the recall increases, with the optimal $F_1$-score achieved at $d = 3$. This is because parameter $d$ controls the degree of relaxation by adjusting the distance between cameras in the common route. Thus, a larger $d$ enables the discovery of more patterns but also introduces potential irrelevant results which may reduce the precision.

\begin{figure}[h!]
    \vspace{-2mm}
    \setlength{\abovecaptionskip}{3pt}
    \subfigcapskip=-9pt
    \subfigbottomskip=-3pt
    
    \centering
    \includegraphics[width=0.28\textwidth]{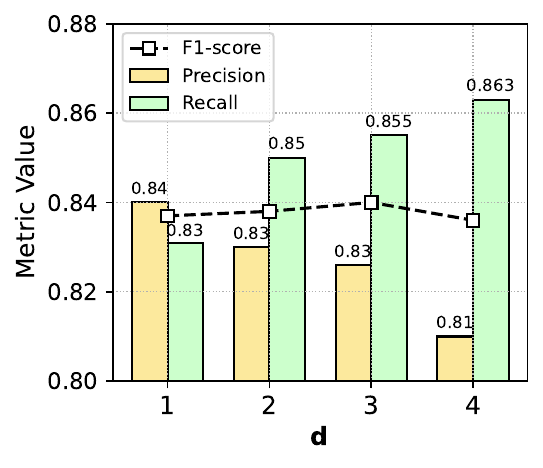}
    \caption{Pattern quality with varying $d$.}
    \label{exp:f1-d}
    
    \vspace{-2mm}
\end{figure}

In the \blue{next examination}, we examine the sensitivity to the accuracy of offline trajectory recovery algorithms. Specically, we inject noise into the recovered trajectories to control the degree of IDF1~\cite{ristani2016performance}, which is a popular metric for the accuracy of multi-object tracking. In terms of noise injection operators, we randomly perform  trajectory shift, node deletion and ID switches to construct degraded trajectories. Specifically, the trajectory shift operator offsets the values of time frames and bounding boxes in a segment of the tracked trajectory, the node deletion operator removes a specific tracked segment, and the ID switch operator replaces the moving object of a segment of the tracked trajectory with another random object.

Figure~\ref{exp:f1-idf1} presents the comparison of the patterns discovered by VPlatoon and VConvoy with the ground truth at different accuracies of input trajectories. We observe that VPlatoon achieves a clear advantage in $F_1$-score for both low and high accuracy trajectories. Our proposed relaxed definition of co-movement pattern can identify more high-quality patterns while introducing negligible irrelevant results.

\begin{figure}[h!]
    \vspace{-2mm}
    \setlength{\abovecaptionskip}{6pt}
    \subfigcapskip=-9pt
    \subfigbottomskip=-3pt
    
    \centering
    \includegraphics[width=0.28\textwidth]{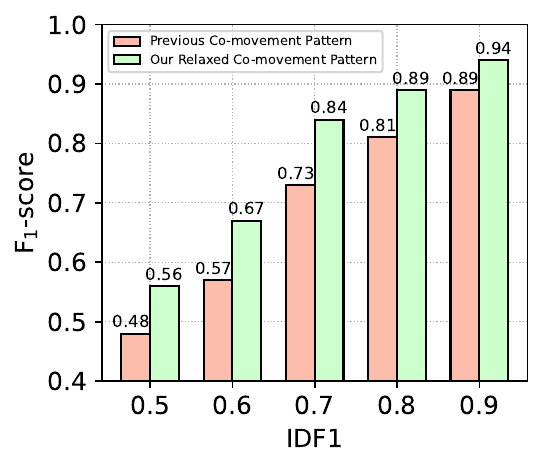}
    \caption{Pattern quality with varying accuries of recovered trajectories.}
    \label{exp:f1-idf1}
    
    \vspace{-2mm}
\end{figure}

\subsection{\blue{Case Study}}
\blue{Finally, We present a case study to further demonstrate the effectiveness of VPlatoon. Two representative VPlatoon patterns from the CityFlow dataset are visualized as illustration. The pattern parameters are set as $m = 2$, $k = 3$, $\epsilon = 12$s, and $d = 2$.}

\blue{Figure~\ref{exp:case-study-1} illustrates a VPlatoon pattern from videos with substantial object occlusion. Due to traffic congestion, vehicle $o_{449}$ and $o_{487}$ are largely obstructed at camera $c_{29}$. As a result, the recovered time intervals of them at camera $c_{29}$ are $[114s, 120s]$ and $[115s, 129s]$, both much shorter than $o_{486}$'s $[101s, 117s]$. Similarly, $o_{486}$ is also misidentified at camera $c_{33}$. These imperfections in the recovered trajectories prevent VConvoy from detecting the co-movement pattern of these three objects.}

\begin{figure}[h!]
    \vspace{-1mm}
    \setlength{\abovecaptionskip}{6pt}
    \subfigcapskip=-9pt
    \subfigbottomskip=-3pt
    
    \centering
    \includegraphics[width=0.4\textwidth]{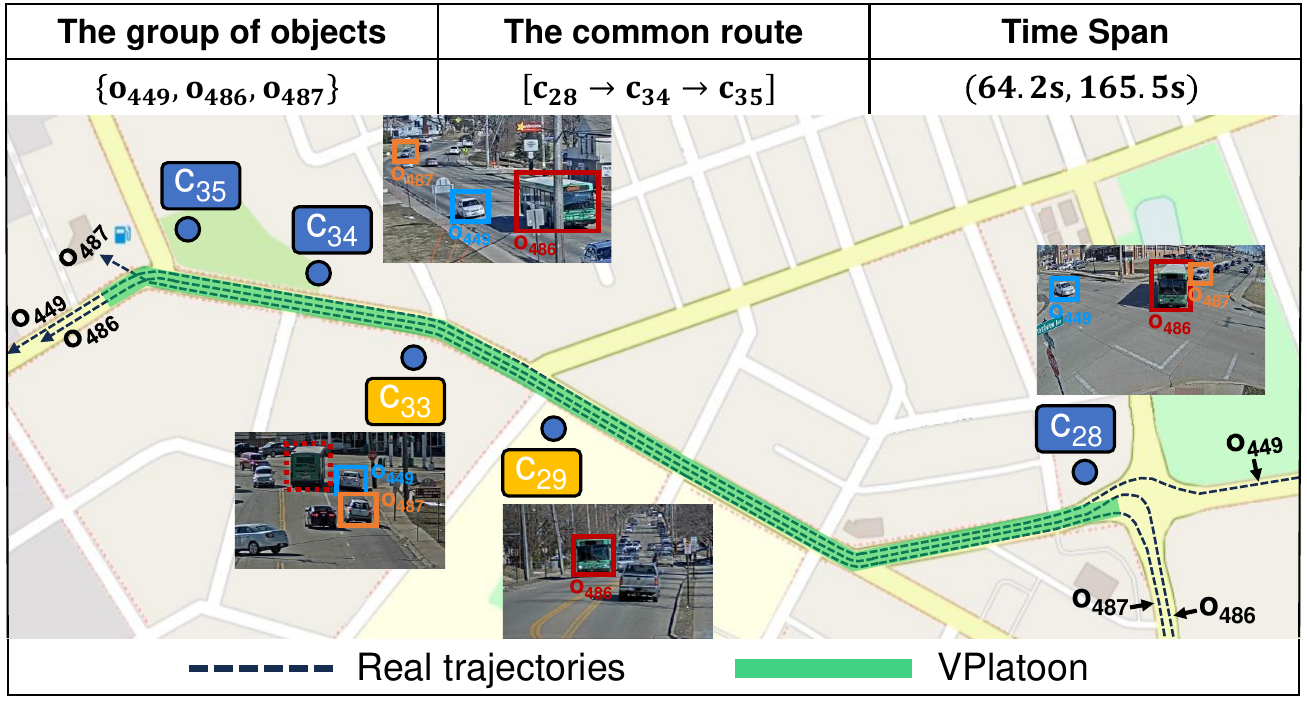}
    \caption{The visualization of a VPlatoon pattern from videos with object occlusion.}
    \label{exp:case-study-1}
    
    \vspace{-2mm}
\end{figure}

\blue{We further examine a VPlatoon pattern from video-recovered trajectories with ID switching in Figure~\ref{exp:case-study-2}. Owing to the differing viewpoints, $o_{398}$ experiences ID switches at both cameras $c_{19}$ and $c_{21}$, causing VConvoy to detect two separate patterns. In contrast, VPlatoon correctly identifies that these two vehicles maintain close motion along the entire route between cameras $c_{16}$ and $c_{25}$, confirming its effectiveness once again.}

\begin{figure}[h!]
    \vspace{-1mm}
    \setlength{\abovecaptionskip}{6pt}
    \subfigcapskip=-9pt
    \subfigbottomskip=-3pt
    
    \centering
    \includegraphics[width=0.4\textwidth]{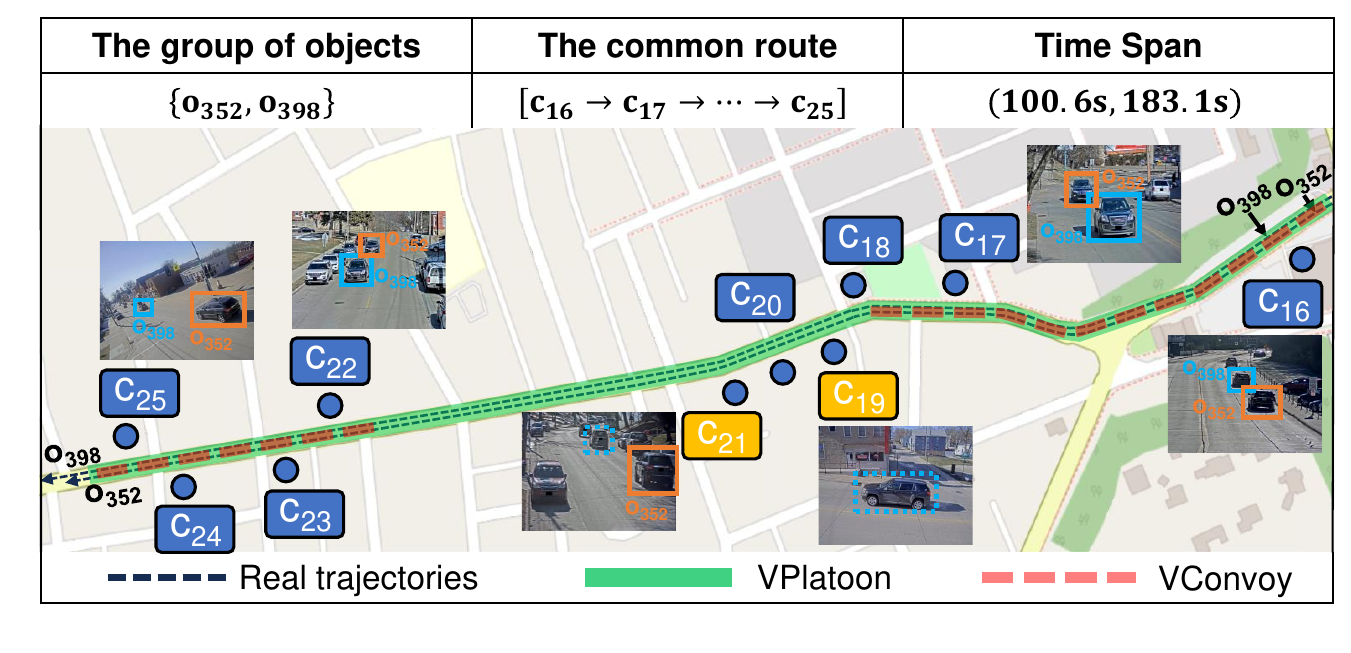}
    \caption{The visualization of a VPlatoon pattern from videos with vehicle mis-matching.}
    \label{exp:case-study-2}
    
    \vspace{-2mm}
\end{figure}

\section{Conclusion} \label{sec:conclusion}
In this paper, we propose a relaxed definition of co-movement pattern from surveillance videos, which uncovers more comprehensive interesting patterns in inaccurately recovered trajectories. We devise a baseline based on previous video-based co-movement pattern mining. A novel enumeration framework called MaxGrowth is also proposed. It efficiently identifies all maximal relaxed co-movement patterns through eliminating the requirement for candidate verification and utilizing the further developed two pruning rules. Extensive experiments confirm the efficiency of MaxGrowth and the effectiveness of our proposed pattern definition.

In the future, it would be interesting to leverage probabilistic models to develop more general and robust definitions of co-movement patterns. In addition, distributed optimization for co-movement pattern mining from video data could also be a promising research direction.


\clearpage

\balance
\bibliographystyle{ACM-Reference-Format}
\bibliography{references}

\end{document}